\documentclass{article} 
\usepackage{iclr2024_conference,times}


\usepackage{amsmath,amsfonts,bm}









\def\eqref#1{equation~\ref{#1}}









\def\1{\bm{1}}










\DeclareMathAlphabet{\mathsfit}{\encodingdefault}{\sfdefault}{m}{sl}
\SetMathAlphabet{\mathsfit}{bold}{\encodingdefault}{\sfdefault}{bx}{n}











\newcommand{\E}{\mathbb{E}}

\newcommand{\R}{\mathbb{R}}



\DeclareMathOperator*{\argmin}{arg\,min}

\usepackage{hyperref}
\usepackage{url}

\usepackage[utf8]{inputenc} 
\usepackage[T1]{fontenc}    
\usepackage{hyperref}       
\usepackage{cleveref}
\usepackage{url}            
\usepackage{booktabs}       
\usepackage{amsfonts}       
\usepackage{nicefrac}       
\usepackage{microtype}      
\usepackage{xcolor}         
\usepackage{algorithm}
\usepackage[noend]{algpseudocode}
\usepackage{enumerate}
\usepackage[shortlabels]{enumitem}
\usepackage{amsmath, amsfonts, amssymb}
\usepackage{bbm}
\usepackage{amsthm}
\usepackage{mathrsfs}
\usepackage{bm}
\usepackage{graphicx}
\usepackage{alltt}
\usepackage{float}
\usepackage{booktabs}
\usepackage{array}
\usepackage{dsfont}
\usepackage{nicefrac}
\usepackage{comment}
\usepackage{anyfontsize}
\usepackage{longtable}
\usepackage{natbib}
\usepackage{caption}
\allowdisplaybreaks
\usepackage{mathtools}
\usepackage{enumitem}
\usepackage{subfig}
\usepackage{sidecap}

\usepackage{tikz}
\usetikzlibrary{automata, positioning}
\usepackage{wrapfig}

\newtheorem{remark}{Remark}

\crefname{assumption}{Assumption}{Assumptions}

\newenvironment{itemize*}%
{\begin{itemize}[leftmargin=*,topsep=0pt]%
		\setlength{\itemsep}{0pt}%
		\setlength{\parskip}{0pt}}%
{\end{itemize}}
\newenvironment{enumerate*}%
{\begin{enumerate}[leftmargin=*,topsep=0pt]%
		\setlength{\itemsep}{0pt}%
		\setlength{\parskip}{0pt}}%
	{\end{enumerate}}

\newcommand\Tau{\mathrm{T}}



\makeatletter

\newcommand{\Rmnum}[1]{\expandafter\@slowromancap\romannumeral #1@}
\newcommand{\FUNCTION}[2]{%
  \csname ALG@cmd@\ALG@L @Function\endcsname{#1}{#2}%
  \def\jayden@currentfunction{#1}%
}
\makeatother

\renewcommand{\hat}{\widehat}

\newcommand{\Sp}[1]{\left(#1\right)}

\newcommand{\Bp}[1]{\left\{#1\right\}}
\newcommand{\abs}[1]{\left|#1\right|}
\newcommand{\norm}[1]{\left\|#1\right\|}

\newcommand{\cA}{\mathcal{A}}
\newcommand{\cB}{\mathcal{B}}
\newcommand{\cD}{\mathcal{D}}

\newcommand{\cF}{\mathcal{F}}

\newcommand{\cN}{\mathcal{N}}

\newcommand{\cS}{\mathcal{S}}
\newcommand{\cT}{\mathcal{T}}

\renewcommand{\R}{\mathbb{R}}
\newcommand{\N}{\mathbb{N}}
\newcommand{\Z}{\mathbb{Z}}

\newcommand{\cM}{\mathcal{M}}

\newcommand{\setto}{\leftarrow}

\newcommand{\cx}{\mathrm{cx}}




\newcommand{\relu}{\mathrm{ReLU}}

\newtheorem{theorem}{Theorem}
\newtheorem{lemma}{Lemma}
\newtheorem{definition}{Definition}
\newtheorem{assumption}{Assumption}

\title{Free from Bellman Completeness: \\ Trajectory Stitching via Model-based  \\
Return-conditioned Supervised Learning}

\iclrfinalcopy

\author{{Zhaoyi Zhou\textsuperscript{1}, Chuning Zhu\textsuperscript{2}, Runlong Zhou\textsuperscript{2}, Qiwen Cui\textsuperscript{2}, Abhishek Gupta\textsuperscript{2}, Simon S. Du\textsuperscript{2}} \\
{\textsuperscript{1} Institute for Interdisciplinary Information Sciences, Tsinghua University} \\ {\textsuperscript{2}Paul G. Allen School of Computer Science and Engineering, University of Washington} \\
{\href{mailto:zhouzhao20@mails.tsinghua.edu.cn}{\texttt{zhouzhao20@mails.tsinghua.edu.cn}}} \\
{\texttt{\{zchuning,vectorzh,qwcui,abhgupta,ssdu\}@cs.washington.edu}}
}

\begin{document}

\maketitle

\begin{abstract}
Off-policy dynamic programming (DP) techniques such as $Q$-learning have proven to be important in sequential decision-making problems. In the presence of function approximation, however, these techniques often diverge due to the absence of Bellman completeness in the function classes considered, a crucial condition for the success of DP-based methods. In this paper, we show how off-policy learning techniques based on return-conditioned supervised learning (RCSL) are able to circumvent these challenges of Bellman completeness, converging under significantly more relaxed assumptions inherited from supervised learning. We prove there exists a natural environment in which if one uses two-layer multilayer perceptron as the function approximator, the layer width needs to grow \emph{linearly} with the state space size to satisfy Bellman completeness while a constant layer width is enough for RCSL. These findings take a step towards explaining the superior empirical performance of RCSL methods compared to DP-based methods in environments with near-optimal datasets. Furthermore, in order to learn from sub-optimal datasets, we propose a simple framework called MBRCSL, granting RCSL methods the ability of dynamic programming to stitch together segments from distinct trajectories. MBRCSL leverages learned dynamics models and forward sampling to accomplish trajectory stitching while avoiding the need for Bellman completeness that plagues all dynamic programming algorithms. We propose both theoretical analysis and experimental evaluation to back these claims, outperforming state-of-the-art model-free and model-based offline RL algorithms across several simulated robotics problems.\footnote[1]{Our code is available at \url{https://github.com/zhaoyizhou1123/mbrcsl}. Part of the work was done while Zhaoyi Zhou was
visiting the University of Washington. }

\end{abstract}

\section{Introduction}
The literature in reinforcement learning (RL) has yielded a promising set of techniques for sequential decision-making, with several results showing the ability to synthesize complex behaviors in a variety of domains \citep{mnih2013playing,lillicrap2015continuous,haarnoja2018soft} even in the absence of known models of dynamics and reward in the environment.
Of particular interest in this work is a class of reinforcement learning algorithms known as off-policy RL algorithms \citep{fujimoto2018off}.
Off-policy RL algorithms are those that are able to learn optimal behavior from an ``off-policy'' dataset - i.e. a potentially suboptimal dataset from some other policy.
Moreover, off-policy RL methods are typically able to ``stitch'' together sub-trajectories across many different collecting trajectories, thereby showing the ability to synthesize behavior that is better than any trajectories present in the dataset \citep{degris2012off,kumar2019stabilizing}.

\begin{figure}
    \centering
    \includegraphics[width=0.8\linewidth]{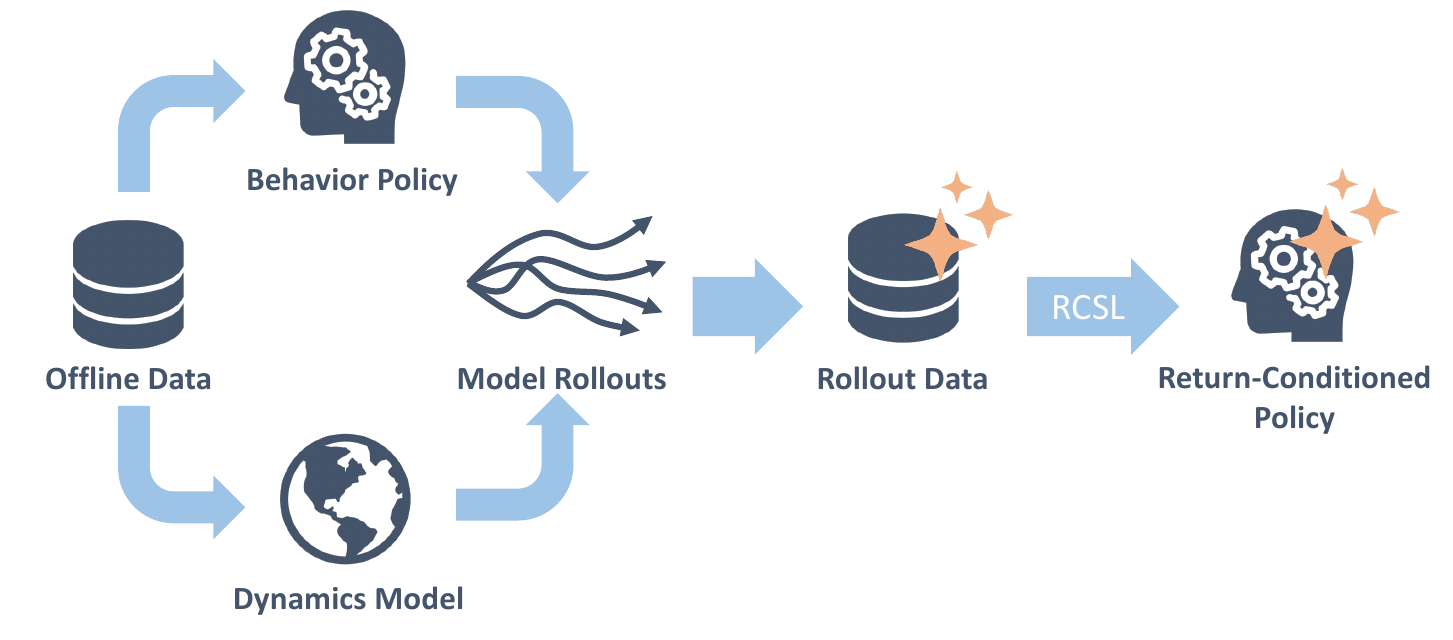}
    
    \caption{
    \textbf{Model-based return-conditioned supervised learning} (MBRCSL).
    Model-based rollouts augments datset with potentially optimal trajectories, enabling trajectory stitching for RCSL while avoiding Bellman completeness requirements.
    }
    \vspace{-0.5cm}
    \label{fig:algo}
\end{figure}

The predominant class of off-policy RL algorithms builds on the idea of dynamic programming (DP) \citep{bellman1954theory,bellman1957markovian} to learn state-action $Q$-functions.
These techniques iteratively update the \emph{$Q$-function} by applying the \emph{Bellman operator} on the current $Q$-function.
The step-wise update of DP enables the advantages listed above - usage of prior suboptimal data and synthesis of optimal policies by stitching together multiple suboptimal trajectories.
On the other hand, these off-policy RL algorithms based on dynamic programming can only guarantee convergence
under the condition of \emph{Bellman completeness}, a strong and nuanced claim on function approximation that ensures  the projection error during iterative DP can be reduced to be arbitrarily low.
Unlike supervised learning, designing a function class that satisfies Bellman completeness is particularly challenging,
as simply enlarging the function class does not guarantee Bellman completeness.

This begs the question - \emph{can we design off-policy reinforcement learning algorithms that are free from the requirements of Bellman completeness, while retaining the advantages of DP-based off-policy RL?} To this end, we study a class of off-policy RL algorithms referred to as return-conditioned supervised learning (RCSL) \citep{schmidhuber2019upsidedown, srivastava2019training}.
The key idea of RCSL is to learn a return-conditioned distribution of actions in each state by directly applying supervised learning on the trajectories in the dataset, achieving satisfactory performance by simply conditioning the policy on high desired returns.
These techniques have seen recent prominence due to their surprisingly good empirical performance, simplicity and stability \citep{kumar2019reward}.
While RCSL has empirically shown good results, the reason behind these successes in many empirical domains is still not well understood.
In this work, we show how RCSL can circumvent the requirement for Bellman completeness.
We theoretically analyze how RCSL benefits from not requiring Bellman completeness and thus can provably outperform DP-based algorithms with sufficient data coverage.

While RCSL is able to circumvent the requirement for Bellman completeness, it comes with the requirement for data coverage.
As prior work has shown, RCSL has a fundamental inability to stitch trajectories \citep{brandfonbrener2022does}, a natural advantage of using DP-based off-policy RL.
As long as the dataset does not cover optimal trajectories, RCSL methods are not guaranteed to output the optimal policy.
To remedy this shortcoming, while preserving the benefit of avoiding Bellman completeness, we propose a novel algorithm - model-based RCSL (\textbf{MBRCSL}, cf. Figure \ref{fig:algo}).
The key insight in \textbf{MBRCSL} is that RCSL style methods simply need data coverage, and can take care of optimizing for optimal behavior when the data provided has good (albeit skewed) coverage.
Under the Markovian assumption that is made in RL, we show data coverage can be achieved by simply sampling from the behavior policy under a learned model of dynamics.
We show how a conjunction of two supervised learning problems - learning a model of dynamics and of the behavior policy can provide us the data coverage needed for the optimality of RCSL.
This leads to a method that both avoids Bellman completeness and is able to perform stitching and obtain the benefits of DP. 

Concretely, the contributions of this work are: 
\vspace{-0.1cm}
\begin{enumerate*}
    \item We theoretically show how RCSL can outperform DP when Bellman completeness is not satisfied. 
    \item We theoretically show the shortcomings of RCSL when the off-policy dataset does not cover optimal trajectories. 
    \item We propose a novel technique - MBRCSL that can achieve trajectory stitching without suffering from the challenge of Bellman completeness. 
    \item We empirically show how MBRCSL can outperform both RCSL and DP-based off-policy RL algorithms in several domains in simulation. 
\end{enumerate*}

\subsection{Related Work}

\textbf{Function Approximation and Bellman Completeness.} Function approximation is required when the state-action space is large. 
A line of works has established that even if the optimal $Q$-function is realizable, the agent still needs exponential number of samples~\citep{du2019good,wang2020statistical,wang2021exponential,weisz2021exponential}. 
Bellman completeness is a stronger condition than realizability but it permits sample-efficient algorithms and has been widely adopted in theoretical studies~\citep{munos2008finite,chen2019information,jin2020provably,zanette2020learning,xie2021bellman}.

\textbf{Return-conditioned supervised learning.}
Return-conditioned supervised learning (RCSL) directly trains a return-conditioned policy via maximum likelihood \citep{schmidhuber2019upsidedown, srivastava2019training, kumar2019reward,andrychowicz2017hindsight,furuta2021generalized,ghosh2019learning}
The policy architecture ranges from MLP \citep{emmons2022rvs,brandfonbrener2022does} to transformer \citep{chen2021decision, bhargava2023sequence} or diffusion models \citep{ajay2022conditional}. 

The finding that RCSL cannot do stitching is first proposed by \citep{kumar2022should} through empirical studies. 
Here, the trajectory stitching refers to combination of transitions from different trajectories to form potentially better trajectories, which is a key challenge to offline RL algorithms \citep{char2022bats,hepburn2022model,wu2023elastic}. 
\citet{brandfonbrener2022does} gives an intuitive counter-example, but without a rigorous theorem.
In Section~\ref{sec:lim_ctx}, we provide two different reasons and rigorously show the the inability of RCSL to do stiching.
\citet{brandfonbrener2022does} also derives a sample complexity bound of RCSL.
However, they do not concretely  characterize when RCSL requires much weaker conditions than $Q$-learning to find the optimal policy.
\citet{wu2023elastic,liu2023emergent,yamagata2023q} try to improve RCSL performance on sub-optimal trajectories by replacing dataset returns with (higher) estimated returns, while our approach aims at extracting \textit{optimal} policy from dataset through explicit trajectory stitching. Besides, \citet{paster2022you,yang2022dichotomy} address the problems of RCSL in stochastic environments.

\textbf{Model-based RL.} Model-based RL learns an approximate dynamics model of the environment and extracts a policy using model rollouts~\citep{chua2018pets, janner19mbpo, hafner21dreamerv2, rybkin21latco}. A complementary line of work applies model-based RL to offline datasets by incorporating pessimism into model learning or policy extraction \citep{kidambi2020morel, yu2020mopo, yu2021combo}. 
We leverage the fact that models do not suffer from Bellman completeness and use it as a crucial tool to achieve trajectory stitching.

\section{Preliminaries}
\label{sec:prelim}

\textbf{Finite-horizon MDPs.}
A finite-horizon Markov decision process (MDP) can be described as $\cM=(H,\ \cS,\ \cA,\ \mu,\ \cT,\ r)$, where $H$ is the planning horizon, $\cS$ is the state space, and $\cA$ is the action space.
$\mu\in \Delta(\cS)$ is the initial state distribution, and $\cT: \cS\times \cA\rightarrow \Delta(\cS)$ is the  transition dynamics. Note that for any set $\mathcal{X}$, we use $\Delta(\mathcal{X})$ to denote the probability simplex over $\mathcal{X}$. In case of deterministic environments, we abuse the notation and write $\cT: \cS\times \cA\rightarrow \cS$,
$r: \cS\times \cA\rightarrow [0,1]$.

\textbf{Trajectories.}
A trajectory is a tuple of states, actions and rewards from step 1 to $H$: $\tau=(s_1,a_1,r_1,\cdots,s_H,a_H,r_H)$.
Define the return-to-go (RTG) of a trajectory $\tau$ at timestep $h$ as $g_h=\sum_{h'=h}^H r_{h'}$.
The alternative representation of a trajectory is $\tau = (s_1, g_1, a_1; \cdots; s_H, g_H, a_H)$.
We abuse the notation and call the sub-trajectory $\tau [i: j] = (s_i, g_i, a_i; \cdots; s_j, g_j, a_j)$ as a trajectory.
Finally, we denote $\Tau$ as the set of all possible trajectories.

\textbf{Policies.}
A policy $\pi: \Tau \to \Delta(\cA)$ is the distribution of actions conditioned on the trajectory as input.
If the trajectory contains RTG $g_h$, then the policy is a \emph{return-conditioned policy}.
The Markovian property of MDPs ensures that the optimal policy falls in the class of \emph{Markovian policies}, $\Pi^{\textup{M}} = \{\pi : \cS \to \Delta(\cA)\}$.
A policy class with our special interest is \emph{Markovian return-conditioned policies}, $\Pi^{\textup{MRC}} = \{ \pi: \cS \times \R \to \Delta(\cA)\}$, i.e., the class of one-step return-conditioned policies.

\textbf{Value and $Q$-functions.}
We define value and $Q$-functions for policies in $\Pi^{\textup{M}}$.
Let $\E_\pi[\cdot]$ denote the expectation with respect to the random trajectory induced by $\pi \in \Pi^{\textup{MD}}$ in the MDP $\cM$, that is, $\tau=(s_1, a_1, r_1, \dotsc, s_H, a_H, r_H)$, where $s_1\sim \mu(\cdot)$, $a_h \sim \pi(\cdot | s_h)$, $r_h = r^u(s_h,a_h)$, $s_{h + 1} \sim \cT(\cdot|s_h,a_h)$ for any $h\in [H]$.
We define the value and $Q$-functions using the RTG $g_h$:
\begin{align*}
V^\pi_h(s) = \E_\pi \left[g_h \middle| s_h = s \right], ~ Q^\pi_h(s,a) = \E_\pi \left[g_h \middle| s_h = s, a_h = a \right].
\end{align*}
For convenience, we also define $V_{H+1}^{\pi}(s)=0$, $Q_{H+1}^{\pi}(s,a)=0$, for any policy $\pi$ and $s,a$.

\textbf{Expected RTG.}
Return-conditioned policies needs an initial desired RTG $\tilde{g}_1$ to execute.
The \emph{return-conditioned evaluation process} for return-conditioned policies is formulated in \Cref{alg:eval}.
We know that $\tilde{g}_1$ is not always equal to $g$, the real RTG.
Hence to evaluate the performance of $\pi \in \Pi^{\textup{MRC}}$, we denote by $J(\pi, \tilde{g}_1)$ the expected return $g$ under $\pi$ with initial desired RTG $\tilde{g}_1$. 

\setlength{\textfloatsep}{0.2cm}
\begin{algorithm}[t]
\caption{Return-conditioned evaluation process}
\label{alg:eval}
\begin{algorithmic}[1]
\State \textbf{Input:} Return-conditioned policy $\pi: \cS \times \R \rightarrow \Delta(\cA)$, desired RTG $\tilde{g}_1$.
\State Observe $s_1$.
\For{$h=1,2,\cdots,H$}
    \State Sample action $a_h\sim \pi(\cdot|s_h,\tilde{g}_h)$.
    \State Observe reward $r_h$ and next state $s_{h+1}$.
    \State Update RTG $\tilde{g}_{h+1}=\tilde{g}_h - r_h$. 
\EndFor
\State \textbf{Return:} Actual RTG $g = \sum_{h=1}^H r_h$.
\end{algorithmic}
\end{algorithm}

\textbf{Offline RL.}
In offline RL, we only have access to some fixed dataset $\cD = \Bp{\tau^1,\tau^2,\cdots, \tau^K}$, where $\tau^i = (s_1^i,a_1^i,r_1^i,\cdots,s_H^i,a_H^i,r_H^i)$ $(i\in [K])$ is a trajectory sampled from the environment. We abuse the notation and define $(s,a)\in \cD$ if and only if there exists some $i\in [K]$ and $h\in [H]$ such that $(s_h^i,a_h^i)=(s,a)$. We say dataset $\cD$ \emph{uniformly covers} $\cM$ if $(s,a)\in \cD$ for any $s\in \cS$, $a\in \cA$. For any deterministic optimal policy $\pi^*:\cS\rightarrow \cA$ of $\cM$, we say dataset $\cD$ \emph{covers} $\pi^*$ if $(s,\pi^*(s))\in \cD$ for any $s\in \cS$.
The goal of offline RL is to derive a policy using $\cD$ which maximizes the expected total reward.
For Markovian policy $\pi \in \Pi^{\textup{M}}$, the goal is $g^* = \max_\pi \sum_s \mu (s) V_1^\pi (s)$.
For Markovian return-conditioned policy $\pi \in \Pi^{\textup{MRC}}$, the goal is $\max_\pi J (\pi, g^*)$.

\subsection{Realizability and Bellman Completeness of Q-Learning}

Many exisiting RL algorithms, such as $Q$-learning and actor-critic, apply dynamic-programming (DP) technique \citep{bertsekas1996neuro} to estimate value functions or $Q$-functions. 
For instance, $Q$-learning updates estimated (optimal) $Q$-function $\hat Q_h(s,a)$ by
\begin{align}
    \hat Q_h \setto \cB \hat Q_{h+1} = \{r(s,a) + \E_{s' \sim \cT(\cdot | s,a)}\sup_{a'} \hat Q_{h+1}(s',a')\}_{(s, a) \in \cS \times \cA}.
\label{eq:bellman_op}
\end{align}
where $\cB$ is called the \emph{Bellman operator} \citep{sutton2018reinforcement}. 

In practice, $\hat Q$ is represented using some function approximation class such as multilayer perceptron (MLP).
Two crucial conditions for DP-based methods with function approximation to succeed are realizability (\Cref{def:realizability}) and to be Bellman complete (\Cref{def:bellman_complete}) \citep{Munos2005ErrorBF,zanette2023realizability}. 

\begin{definition} [Realizability] \label{def:realizability}
 The function class $\cF$ contains the optimal $Q$-function:   $Q^* \in \cF$.
\end{definition}

\begin{definition}[Bellman complete] \label{def:bellman_complete}
    A function approximation class $\cF$ is Bellman complete under Bellman operator $\cB$ if 
    $\max_{Q\in \cF}\min_{Q' \in \cF} \norm{Q' - \cB Q} = 0$.
\end{definition}

\begin{remark}
It has been shown that  realizability alone is not enough, as \citet{weisz2021exponential,wang2020statistical} show that it requires an exponential number of samples to find a near-optimal policy.
\end{remark}

\subsection{Return-Conditioned Supervised Learning}
\label{sec:markov_rcsl}
RCSL methods learn a return-conditioned policy $\pi^{\cx}(a_h|\tau[h-\cx+1: h-1],s_h,g_h)$ from $\cD$ via supervised learning, where $\cx$ is the policy context. 
We focus on return-conditioned policies with context $1$, i.e., policy of the form $\pi^1(a_h|s_h,g_h)$, because the optimal policy can be represented by a return-conditioned policy of context $1$.
Next, we introduce the \emph{RTG dataset}, summarizes the information that RCSL algorithm relies on.

\begin{definition}[RTG dataset]
\label{def:dataset}
    Given a collection of trajectories $\cD= \{\tau^1, \tau^2, \cdots, \tau^K\}$, 
    an RTG dataset for $\cD$ is defined by 
    $
    \cD^{1}:=\Bp{(s_t^k, g_t^k,a^k_t)\; |\; 1\leq k\leq K, 1\leq t\leq H}.
    $
\end{definition}
We are now ready to define the RCSL framework. 
\begin{definition}[RCSL framework]
\label{def:RCSL}
    An offline RL algorithm belongs to RCSL framework if it takes an RTG dataset as input and outputs a return-conditioned policy.
\end{definition}

\section{Understanding the Strengths and Weaknesses of Return-Conditioned Supervised Learning Methods}
\label{sec:analysis}

In this section, we first provide a theoretical discussion of why RCSL methods can outperform the DP methods in deterministic environments, when there is sufficient data coverage. 
We then give rigorous theoretical treatments to explain why  RCSL style methods cannot do ``trajectory-stitching".

\subsection{Insight 1: RCSL can outperform DP in deterministic environments}
\label{sec:strength_rcsl}
In this section, we show that RCSL can achieve better performance than DP-based algorithms, as long as the dataset contains expert (optimal) trajectory. This stems from the fact that DP-algorithms needs to satisfy Bellman-completeness when function approximation is used, while RCSL algorithms only need to represent the optimal policy, and hence only require realizability.

To explain this point, we choose one representative implementation from both RCSL and DP-based methods respectively to make a comparison. For the former, we design ``MLP-RCSL'', a simple RCSL algorithm that learns a deterministic policy. Suppose that the action space $\cA$ is a subset of $\R^d$. MLP-RCSL trains a policy network $\hat \pi_{\theta}(s,g) \in \R^d$ by minimizing mean square error (MSE) of action (or more generally, maximize the log-likelihood):
\begin{align}
    L(\theta)= {\E}_{(s,g,a)\sim \cD^1} \norm{\hat \pi_{\theta}(s,g) - a}^2.
    \label{eq:rcsl_loss}
\end{align}
We add a projection $f(a) = \argmin_{a\in \cA} \norm{a - \hat \pi_{\theta}(s,g)}$ at the output of policy network, so that we are ensured to get a valid action.

We choose $Q$-learning as a representative for DP-based methods, where the $Q$-function is learned by a $Q$-network, denoted by $\hat Q_{\phi}(s,a)$.
To make the comparison fair, both $\hat \pi_{\theta}(s,g)$ and $\hat Q_{\phi}(s,a)$ are implemented with a \emph{two-layer} MLP network with ReLU activation.

\begin{remark}
  Often DP-based offline RL methods aim to penalize out-of-distribution actions (in the presence of incomplete coverage) via techniques such as conservativism \citep{kumar2020conservative} or constrained optimization \citep{wu2019behavior}. However, we do not need to consider these techniques in our constructed environment, as conservatism is not needed when the dataset has full coverage.
\end{remark}
\vspace{-0.2cm}

We first analyze when MLP-RCSL and $Q$-learning satisfy their own necessary requirements to succeed, in terms of the hidden layer size needed. To be specific, we set the requirements as:
\vspace{-0.2cm}
\begin{itemize*}
    \item MLP-RCSL: The policy network must be able to represent the optimal policy.
    \item $Q$-learning: The $Q$-network must be able to represent the optimal $Q$-function and satisfy Bellman completeness.
\end{itemize*}

\begin{figure}[t]
    \centering

    \subfloat{
    \scalebox{0.5}{
    \begin{tikzpicture}[shorten >=1pt,node distance=30cm,auto,thick]
    \tikzset{state/.style={shape=circle,draw=black,minimum size=2cm}}

    \tikzset{init/.style={shape=rectangle,draw=black,minimum size=1.5cm}}

    \pgfmathsetmacro{\U}{4}
    \pgfmathsetmacro{\Umin}{int(\U-1)}
    \pgfmathsetmacro{\verspace}{2.4}
    \pgfmathsetmacro{\horspace}{4.2}

    \node[init] (s0) at (\verspace, -0.5*\horspace) {\LARGE $0$};
    \foreach \i in {1,2}{
        \node[state] (s\i) at (\i*\verspace, 0) {\LARGE $\i$};
    }
    \node[state] (s\U) at (5*\verspace, 0) {\LARGE $u$};
    \node[state] (s\Umin) at (4*\verspace, 0) {\LARGE $u-1$};


    \node[state] (m1) at (5*\verspace, -\horspace) {\LARGE $u+1$};
    \node[state] (m2) at (4*\verspace, -\horspace) {\LARGE $u+2$};
    \node[state] (m\U) at (1*\verspace, -\horspace) {\LARGE $2u$};
    \node[state] (m\Umin) at (2*\verspace, -\horspace) {\LARGE $2u-1$};
    
    \foreach \i in {1,2}{
        \node[state] (l\i) at (\i*\verspace, -2*\horspace) {\LARGE $2u+\i$};
    }
    \node[state] (l\U) at (5*\verspace, -2*\horspace) {\LARGE $3u$};
    \node[state] (l\Umin) at (4*\verspace, -2*\horspace) {\LARGE $3u-1$};
    
    \node[state] (end1) at (3*\verspace,-0.5*\horspace) {\LARGE $3u+1$};
    \node[state] (end2) at (3*\verspace,-1.5*\horspace) {\LARGE $3u+2$};

    \foreach \i in {0,1,2} {
        \node (ellipsis\i) at (3*\verspace, -\i*\horspace) {$\cdots$};
    }

    \foreach \i [evaluate=\i as \j using int(\i+1)] in {0,1,\Umin}{
        \draw[->,red] (s\i) -- node[above, midway] {} (s\j);
    }
    \draw[->,red] (s2) -- node[above, midway] {} (ellipsis0);
    \draw[->,red] (ellipsis0) -- node[above, midway] {} (s\Umin);

    \foreach \i [remember=\i as \previ (initially 1)] in {1,2,\Umin,\U}{
        \ifodd\i
            \draw[->,red] (m\i) -- node[above, midway] {} (end1);
        \else
            \draw[->,red] (m\i) -- node[above, midway] {} (end2);
        \fi
    }
    
    \foreach \i in {1,2,\Umin,\U}{
        \draw[->,red] (l\i) -- node[below, midway] {} (end2);
    }

    \foreach \i in {0,1,\Umin,\U}{
        \ifodd\i
            \draw[->,blue] (s\i) -- node[left, midway] {} (end1);
        \else
            \draw[->,blue] (s\i) -- node[right, midway] {} (end2);
        \fi
    }

    \foreach \i [evaluate=\i as \j using int(\i+1)] in {1,\Umin}{
        \draw[->,blue] (m\i) -- node[above, midway] {} (m\j);
        \draw[->,blue] (l\i) -- node[above, midway] {} (l\j);
    }
    \draw[->,blue] (l\U) to[out=60,in=30] node[above, midway] {} (end1);
    
    \draw[->,blue] (m2) -- node[above, midway] {} (ellipsis1);
    \draw[->,blue] (ellipsis1) -- node[above, midway] {} (m\Umin);
    \draw[->,blue] (l2) -- node[above, midway] {} (ellipsis2);
    \draw[->,blue] (ellipsis2) -- node[above, midway] {} (l\Umin);
    
    \draw[->,blue] (end1) -- node[above, midway] {} (end2);
    \draw[->,blue] (end2) to[loop right] node[below, midway] {} (end2);
    \draw[->,red] (end2) to[loop left] node[below, midway] {} (end2);

    \draw[->,red] (s\U) -- node[above, midway] {} (m1);    
    \draw[->,blue] (m\U) -- node[above, midway] {} (l1);

\end{tikzpicture}
    }
    }
    \subfloat{
    \includegraphics[width=0.5\columnwidth]{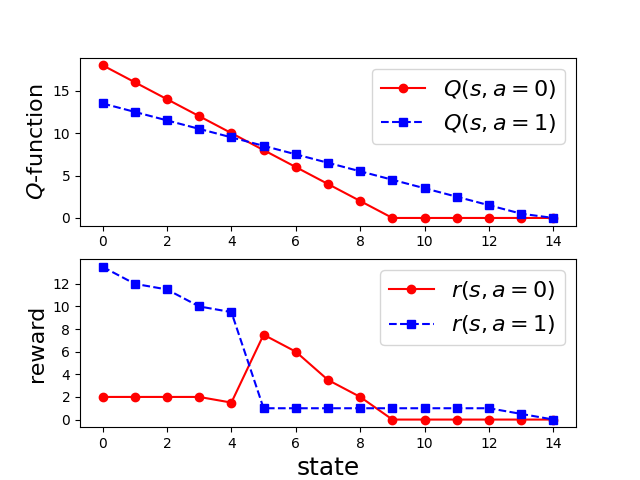}
    }

    \caption{Left: Illustration for LinearQ transition dynamics.
    This figure is valid for $u$ being an even number.
    The transitions are plotted in red and blue arrows, where red arrows stand for $a=0$ and blue ones for $a=1$.
    The rewards are omitted. 
    Upper right: Illustration for optimal $Q$-function of LinearQ with respect to states, where size parameter $u=4$. Realizability can be satisfied with constant hidden neurons.
    Lower right: Illustration for reward function of LinearQ with respect to states, where size parameter $u=4$. $\Omega(\abs{\cS^u})$ hidden neurons are needed to represent reward. 
    } 
    \label{fig:linear_Q}
\end{figure}

\begin{theorem}\label{thm:strength_rcsl}
    There exists a series of MDPs and associated datasets $\{\cM^u=(H^u,\cS^u, \cA, \mu, \cT^u, r^u), \cD^u\}_{u\geq 1}$, such that the following properties hold: 
    \vspace{-0.2cm}
    \begin{enumerate}[(i)]
    		\setlength{\itemsep}{0pt}%
		\setlength{\parskip}{0pt}%
        \item $\abs{\cS^u} = \Theta(u)$;
        \item $\cD^u$ uniformly covers $\cM^u$;
        \item There exists a series of MLP-RCSL policy network (two-layer MLP) $\{\pi_{\textup{RC}}^u\}_{u \ge 1}$ such that $\pi_{\textup{RC}}^u$ can represent optimal policy for $\cM^u$ with $O(1)$ neurons in the hidden layer;
        \item For $u \ge 1$, any two-layer MLP $Q$-network representing the optimal $Q$-function and satisfying Bellman completeness simultaneously should have $\Omega (u)$ neurons in the hidden layer.
    \end{enumerate} 
\end{theorem}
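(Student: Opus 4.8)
The plan is to exhibit the family explicitly — the ``LinearQ'' construction sketched in \Cref{fig:linear_Q} — and then verify the four clauses, of which (iv) is the heart.

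\textbf{Construction and the easy clauses.} I would take $\cS^u$ to be the layered chain of \Cref{fig:linear_Q}: a ``main path'' $0\to 1\to\cdots\to u\to u+1\to\cdots\to 3u$ of length $\Theta(u)$ together with two absorbing terminal states $3u+1,3u+2$, binary actions $\cA=\{0,1\}$ where at each state one action advances along the path and the other is a ``shortcut'' that jumps directly to one of the two terminals, the target being selected by the \emph{parity} of the current state index; set $H^u=\Theta(u)$. Let $\cD^u$ contain every transition (both actions at every state). Then $\abs{\cS^u}=3u+3=\Theta(u)$, which is (i), and $\cD^u$ uniformly covers $\cM^u$, which is (ii), with no further argument. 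All the design work lies in the rewards $r_h$, which I would choose so that: (a) the two terminals carry different per-step rewards, forcing the shortcut action's reward — read as a function of the scalar state index — to alternate, hence be a high-frequency zigzag with $\Omega(u)$ sign changes in its successive differences; (b) this zigzag is cancelled exactly by the matching zigzag in the downstream value, so that the recursion $Q_h^*=\cB Q_{h+1}^*$ closes on a function that is affine, or $O(1)$-piece piecewise-linear, in $(s,a)$ — the ``Upper right'' panel of \Cref{fig:linear_Q}; and (c) advancing is always (weakly) optimal, so $\pi^*$ is the threshold $\pi^*(s)=\mathbf{1}[s\ge u+1]$. Keeping every $r_h$ in $[0,1]$ while maintaining (a)--(c) is the bookkeeping that must be done with care.

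\textbf{Clause (iii).} Since $\pi^*$ is a step function of the scalar state that ignores the RTG input, it is represented exactly on the integer state grid by the two-neuron network $\widehat\pi_\theta(s,g)=\relu(s-u)-\relu(s-u-1)$ composed with the projection onto $\{0,1\}$; this is $O(1)$ hidden neurons, uniformly in $u$.

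\textbf{Clause (iv), the crux.} Let $\cF_k$ be the function class computed by a two-layer ReLU $Q$-network with $k$ hidden units, and suppose $\cF_k$ realizes $Q^*$ and is Bellman complete; I will use only the latter. The zero function lies in $\cF_k$ (zero output layer), so Bellman completeness forces $\cB\mathbf{0}$ into the closure of $\cF_k$; but $\cB\mathbf{0}=r$ since $\sup_{a'}0=0$, and the norm in the Bellman-completeness condition has full support over $\cS\times\cA$ (guaranteed by the uniform coverage in (ii)), so $r$ itself lies in the closure of $\cF_k$ as a function on all of $\cS\times\cA$. Now restrict any member of $\cF_k$ to a fixed action coordinate $a\in\{0,1\}$: it is a continuous piecewise-linear function of the scalar $s$ with at most $k$ breakpoints, and this bound survives taking limits over the bounded state interval. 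By construction $r(\cdot,a)$ for the shortcut action has $\Omega(u)$ successive sign changes on the integer grid, and any continuous piecewise-linear interpolant of such data has $\Omega(u)$ local extrema and therefore $\Omega(u)$ breakpoints; hence $k=\Omega(u)$.

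\textbf{Where the difficulty is.} The obstacle is the reward design underpinning clause (iv): one must arrange that the oscillation making the shortcut reward complex is exactly the oscillation created by the two terminals' differing continuation values, so that $Q^*$ nonetheless collapses to an $O(1)$-complexity function at every step, all while respecting $r_h\in[0,1]$ and leaving the $\Omega(u)$ oscillations intact — verifying this step-by-step cancellation in the Bellman recursion is the real work. Two structural facts must also be in place: the state is fed to the networks as a scalar index (under a one-hot encoding every function is linear, and both realizability and Bellman completeness become free), and the Bellman-completeness norm sees all of $\cS\times\cA$, so that $r$ — not merely its restriction to the reachable states — has to be representable.
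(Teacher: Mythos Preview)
Your plan is correct and matches the paper's approach: the LinearQ family is designed so that parity-based shortcuts to two terminals with different continuation values force the reward to oscillate $\Omega(u)$ times while those oscillations cancel in the Bellman recursion, leaving $Q^*$ with $O(1)$ piecewise-linear pieces; the key step for (iv)---observing that $\mathbf{0}\in\cF_k$ forces $r=\cB\mathbf{0}\in\cF_k$, then counting the complexity of $r$ restricted to one action---is exactly the paper's argument, and your breakpoint/sign-change count is equivalent to the paper's second-order-difference count (each ReLU contributes at most two nonzero second differences on the integer grid, while $r(\cdot,0)$ on the middle segment has $u-2$ of them).

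The one substantive difference is clause (iii). You ignore the RTG input and exhibit a two-neuron network computing the threshold $\pi^*(s)=\mathbf{1}[s\ge u+1]$; this suffices for the theorem as literally stated. The paper instead proves the stronger claim that a $16$-neuron return-conditioned network achieves \emph{zero training error on the specific dataset $\cD^u$}. Because $\cD^u$ contains one-step-deviation trajectories in addition to optimal ones, the target action at $(s,g)$ genuinely depends on $g$ (the suboptimal action appears exactly when $g=Q^*(s,1-\pi^*(s))$), so your two-neuron network would incur positive training loss. The paper's version is what actually connects the construction to the narrative that MLP-RCSL \emph{as a learning procedure} succeeds with $O(1)$ width, not merely that $\pi^*$ is representable. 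Also be aware that the paper's transition structure is a bit more intricate than your sketch---the identity of the ``advance'' action flips at $s=u+1$, and the third segment $[2u{+}1,3u]$ drops the parity dependence---precisely so that \emph{both} $Q^*(\cdot,0)$ and $Q^*(\cdot,1)$ come out as single ReLUs; you will need comparable care when you fill in the rewards.
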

\vspace{-0.2cm}
We also provide simulations to verify the theorem (cf.~\ref{sec:linearq_sim}).

\begin{remark}
From Theorem \ref{thm:strength_rcsl}, we see that the model size of $Q$-learning must grow linearly with state space.
On the other hand, RCSL can scale with large state space
Besides, we also point out that the success of RCSL is based on the existence of \emph{expert trajectory}.
\end{remark}

\textbf{Proof idea.}
Detailed proof of \Cref{thm:strength_rcsl} is deferred to \Cref{sec:proof_strength_rcsl}.
We construct a class of MDPs called LinearQ (\Cref{fig:linear_Q}), such that the optimal policy as well as optimal $Q$-function can be represented by the linear combination of a constant number of ReLU neurons, while the reward function must be represented with $\Omega(\abs{\cS^u})$ ReLU neurons because there are that many.
We denote the all-zero MLP as $\hat Q_{\phi(0)}$ which satisfies $\hat Q_{\phi(0)}(s,a) = 0$ for all $s\in \cS^u$ and $a\in \cA.$  
We see that Bellman operation (cf. Equation \ref{eq:bellman_op})) applies to the all-zero MLP give
$
    \cB \hat Q_{\phi(0)}(s,a) = r^u(s,a), \forall s\in \cS^u, a\in \cA.
$
Thus $\Omega(\abs{\cS^u})$ hidden neurons are needed to satisfy Bellman completeness.
 
\subsection{Insight 2: RCSL cannot perform trajectory-stitching}
\label{sec:lim_ctx}  

It is generally believed that RCSL,  cannot perform trajectory stitching \citep{brandfonbrener2022does, wu2023elastic}.
We provide two theorems to rigorously show the sub-optimality of RCSL algorithms, even when the environment has \emph{deterministic transition} and dataset satisfies \emph{uniform coverage}.
These two theorems present different rationales for the failure of trajectory stitching.
\Cref{thm:ctx1} shows that Markovian RCSL (context length equal to $1$) outputs a policy with a constant sub-optimal gap.
\Cref{thm:dt_stitching} states that any decision transformer (using cross-entropy loss in RCSL) with any context length fails to stitch trajectories with non-zero probability.

\begin{theorem}\label{thm:ctx1}
For any Markovian RCSL algorithm, there always exists an MDP $\cM$ and an (sub-optimal) offline dataset $\cD$ such that
\vspace{-0.2cm}
\begin{enumerate}[(i)]
\setlength{\itemsep}{0pt}%
		\setlength{\parskip}{0pt}%
    \item $\cM$ has deterministic transitions;
    \item $\cD$ uniformly covers $\cM$;
    \item The output return-conditioned policy $\pi$ satisfies $\abs{g^*-J(\pi,g^*)}\geq 1 / 2$ almost surely. 
\end{enumerate}
\end{theorem}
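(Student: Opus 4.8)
The plan is to exploit the fact (Definition~\ref{def:RCSL}) that a Markovian RCSL algorithm sees only the RTG dataset $\cD^1$ — never the trajectories, the timesteps, the transition kernel, or the initial distribution. So I will construct two deterministic MDPs $\cM_1,\cM_2$ together with uniform-coverage datasets $\cD_1,\cD_2$ whose induced RTG datasets coincide, $\cD_1^1=\cD_2^1$, which share the same optimal return $g^*$, but for which no single return-conditioned policy can be within $1/2$ of $g^*$ on \emph{both} when evaluated by Algorithm~\ref{alg:eval} started with desired RTG $\tilde g_1=g^*$. Feeding the common RTG dataset to the given algorithm produces one policy $\pi$, and whichever of $\cM_1,\cM_2$ it fails on is the required witness. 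For a randomized algorithm the witness can be chosen as a function of the realized $\pi$, which yields the ``almost surely'' claim; a symmetric third MDP would remove the dependence if a fixed witness is wanted.

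The construction I have in mind is a short ``stitching'' gadget. Both MDPs share the same state/action spaces and the same initial distribution, supported on two states: a ``rich'' state $P$ with large optimal value and an auxiliary state $T$ whose only purpose is to cover the high-reward actions in a shared ``good region'' via \emph{low-return} trajectories — crucially, not via near-optimal trajectories out of $P$, and in particular $\cD_i$ contains no optimal trajectory from $P$. In $\cM_1$ the first action out of $P$ needed to realize $g^*$ is $0$; in $\cM_2$ it is $1$; the transitions and rewards at $P$ are swapped between the two MDPs, with the discriminating reward placed on a non-terminal transition and absorbed by the continuation chosen in the covering trajectories, so that $\cD_1^1$ and $\cD_2^1$ are literally equal as sets of $(s,g,a)$ triples. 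One then checks (i) $\abs{\cS}$ is a small constant, (ii) each $\cD_i$ visits every state--action pair, (iii) $g^*_{(1)}=g^*_{(2)}=:g^*$, and (iv) in $\cM_i$, achieving return within $1/2$ of $g^*$ under Algorithm~\ref{alg:eval} with $\tilde g_1=g^*$ forces $\pi(\cdot\mid P,g^*)$ to concentrate on the action that is good \emph{in $\cM_i$}; taking the horizon large enough makes the penalty for the wrong first action at $P$, after averaging against $\mu$ and accounting for the drift of the desired RTG, exceed $1/2$.

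Given this, the endgame is short: since $\pi(0\mid P,g^*)+\pi(1\mid P,g^*)\le 1$, the output policy places mass $\ge 1/2$ on the action that is suboptimal in at least one $\cM_i$; propagating this through the deterministic RTG bookkeeping $\tilde g_{h+1}=\tilde g_h-r_h$ and the reward magnitudes gives $\abs{g^*-J(\pi,g^*)}\ge 1/2$ on that MDP, and (ii)--(iii) supply the remaining hypotheses. The step I expect to be genuinely delicate is (iv) together with forcing $\cD_1^1=\cD_2^1$ under uniform coverage: because the return-to-go recorded at any state equals the reward-to-go actually collected thereafter, a trajectory covering a high-reward action near the good region carries a large RTG back to whichever upstream action reaches that region, which would expose the difference between $\cM_1$ and $\cM_2$. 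Routing the good-action coverage through $T$ (whose incoming reward is $0$) and keeping the swapped reward one step away from the horizon are precisely the devices that defeat this leakage; verifying that the resulting two triple-sets are identical, and that a hedging policy still loses $\ge 1/2$ on one side, is the crux.
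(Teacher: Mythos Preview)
Your high-level strategy is exactly the paper's: construct two deterministic MDPs $\cM_1,\cM_2$ whose datasets produce the \emph{same} RTG dataset $\cD_1^1=\cD_2^1$, so any Markovian RCSL algorithm outputs the same policy $\pi$ on both; arrange that the optimal first action differs across the two; and use $p+(1-p)=1$ to force a loss of at least $\tfrac12$ on one of them, choosing the witness MDP as a function of the realized $\pi$.

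Where you diverge is the construction, and here the paper is much simpler than what you propose. The paper uses a \emph{single-state} MDP with \emph{time-dependent} rewards: $\cS=\{s\}$, $\cA=\{a_L,a_R\}$, $H=2H_0$. In $\cM_1$ the good action alternates with step parity; in $\cM_2$ it is always $a_L$. The two datasets are each generated by two deterministic policies (constant policies for $\cM_1$, alternating policies for $\cM_2$), chosen so that the \emph{reward sequences} realized---and hence all the RTGs---coincide across MDPs even though the action sequences differ. With one state and the timestep discarded, $\cD_1^1=\cD_2^1$ is then a two-line check, uniform coverage is trivial, $\mu$ is a point mass, and the first-step shortfall is exactly $(r_g-r_b)/2=\tfrac12$ with $r_g=1,r_b=0$.

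Your multi-state gadget with an auxiliary coverage state $T$ and swapped transitions at $P$ is not wrong in spirit, but the step you yourself flag as ``genuinely delicate'' is not carried out, and it is exactly where such constructions tend to leak: every trajectory that covers a state--action pair at or downstream of $P$ records an RTG computed in its own MDP, so swapping rewards/transitions at $P$ typically shows up in the $(s,g,a)$ triples at $P$ or its successors (the naive swap already fails: $(P,g,0)$ carries a high RTG in one MDP and a low RTG in the other). Routing coverage through $T$ does not by itself fix this, because you still must cover both actions at $P$ in both datasets. You also take $\mu$ supported on two states, which forces a horizon-scaling argument to overcome the averaging; the paper avoids this with $\mu=\delta_s$. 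I would adopt the single-state, time-varying-reward trick---it collapses your crux to a direct verification and removes the need for $T$, the routing, and the horizon scaling.
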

\vspace{-0.2cm}
The key idea is that the information of reward function cannot be recovered from the RTG dataset $\cD^1$.
We are able to construct two MDPs with different reward functions but the same RTG dataset.
Therefore, at least RCSL will at least fail on one of them.
The  proof is deferred to Appendix \ref{sec:proof_ctx1}.
\begin{theorem} \label{thm:dt_stitching}
Under the assumption that a decision transformer outputs a mixture of policies for trajectories in the dataset for an unseen trajectory (details in \Cref{asp:generalization}), there exists an MDP $\cM$ and an (sub-optimal) offline dataset $\cD$ such that 
\vspace{-0.2cm}
\begin{enumerate}[(i)]
\setlength{\itemsep}{0pt}%
		\setlength{\parskip}{0pt}%
    \item $\cM$ has deterministic transitions;
    \item $\cD$ uniformly covers $\cM$;
    \item Any decision transformer $\pi$ satisfies $J(\pi, g^*) < g^*$.
\end{enumerate}
\end{theorem}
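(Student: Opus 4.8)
The plan is to exhibit one deterministic MDP $\cM$ with a short horizon together with a uniformly-covering dataset $\cD$ that contains no full optimal trajectory, and then to show that any decision transformer conditioned on the optimal target return $g^*$ is pushed off the (essentially unique) optimal trajectory with positive probability, so its expected realized return is strictly below $g^*$. Unlike \Cref{thm:ctx1}, where a long-context learner could in principle disambiguate two candidate MDPs from the prefix, here the obstruction must be baked into a single instance and is extracted through \Cref{asp:generalization}.

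First I would construct $\cM$ with a ``stitching'' shape: the behavior with return $g^*$ is obtained only by following the prefix of one dataset trajectory up to a junction state $s^\star$ and then the suffix of a \emph{different} dataset trajectory; no single behavior present in $\cD$ attains $g^*$, and any deviation from this stitched path yields return strictly less than $g^*$, so this path is the unique way to realize $g^*$. The rewards are chosen so that the return-to-go that targeting $g^*$ induces at $s^\star$, call it $\tilde g_{s^\star}$, is never paired with $s^\star$ in $\cD$: whenever a dataset trajectory sits at $s^\star$ it carries a different RTG, because from $s^\star$ it goes on to collect a different amount of reward. I would then pad $\cD$ with a constant number of filler trajectories so that it uniformly covers $\cM$; since $|\cS|$ and $|\cA|$ are constant this is possible without ever producing a full optimal trajectory and without demonstrating the optimal continuation out of $s^\star$ under the history/RTG that targeting $g^*$ produces.

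Second, I would run the return-conditioned evaluation process (\Cref{alg:eval}) with $\tilde g_1 = g^*$ and trace the rollout. At the start the observed history $(s_1,g^*)$ is not the prefix of any dataset trajectory (all trajectories from $s_1$ have return $\ne g^*$), so by \Cref{asp:generalization} the induced action distribution is a mixture of the policies of dataset trajectories; if this mixture already places positive mass on any action other than the optimal prefix action we are done, since from there $g^*$ is unreachable. Otherwise the rollout reaches $s^\star$ with RTG $\tilde g_{s^\star}$, and this history is again unseen, so \Cref{asp:generalization} applies once more: by construction every dataset behavior consistent with this history takes a sub-optimal continuation at $s^\star$, so the mixture places positive probability on a return-decreasing action. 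In either case the realized return falls short of $g^*$ with positive probability, hence $J(\pi,g^*)<g^*$.

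The main obstacle is the joint design of $\cM$ and $\cD$: uniform coverage, ``optimal $=$ stitch'' with a unique realizing path, and the requirement that the RTG labels along that path (at the initial state \emph{and} at $s^\star$) never occur in $\cD$ must all hold simultaneously, and I must rule out the degenerate case in which the generalization mixture happens to reproduce the optimal action everywhere --- which is exactly what the ``no dataset behavior demonstrates the optimal continuation at the relevant context'' property is there to prevent. A secondary technical point is aligning this bookkeeping with the precise statement of \Cref{asp:generalization}, i.e.\ which histories and actions count as ``seen'' and over which trajectories the mixture ranges.
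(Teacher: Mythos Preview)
Your high-level shape --- unseen context at $g^*$, invoke \Cref{asp:generalization}, conclude positive mass on a losing action --- matches the paper, but there is a genuine gap in the step that is supposed to force a sub-optimal action into the mixture.

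\Cref{asp:generalization} only requires $f_\cD$ to return a convex combination over dataset sub-trajectories whose \emph{last state} agrees with the unseen trajectory; it says nothing about the weights and it does \emph{not} match on the RTG or on the full history. At your junction state $s^\star$, the mixture therefore ranges over \emph{all} dataset sub-trajectories ending at $s^\star$. Uniform coverage forces $(s^\star,a^*)\in\cD$, so some sub-trajectory $\tau_A$ ending at $s^\star$ takes the optimal action there; if that sub-trajectory's context is unique in $\cD$, the cross-entropy minimizer gives $\pi(a^*\mid\tau_A)=1$, and $f_\cD$ is free to put weight $1$ on $\tau_A$. Your ``otherwise'' branch then produces the optimal action with probability $1$. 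The safeguard you propose --- ``no dataset behavior demonstrates the optimal continuation at the relevant context'' --- cannot do the job: if ``relevant context'' means the last state, it contradicts uniform coverage; if it means the (state, RTG) pair or the full history, it is not what \Cref{asp:generalization} conditions on. This is precisely the ``degenerate case'' you flag at the end, and your sketch does not actually rule it out.

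The paper closes this gap with one additional idea and a far simpler instance: take $H=2$, three states in a line, two actions, and a dataset of exactly two trajectories with \emph{equal} total return $g_1$ but \emph{different} first actions. Then $(s_1,g_1,a_1)$ and $(s_1,g_1,a_1')$ both appear in $\cD$, so the cross-entropy loss contains $-\log\pi(a_1\mid s_1,g_1)-\log\pi(a_1'\mid s_1,g_1)$ and any minimizer puts strictly positive mass on both actions at the \emph{seen} context $(s_1,g_1)$. Because this is the only dataset sub-trajectory ending at $s_1$, \Cref{asp:generalization} leaves $f_\cD$ no freedom at the unseen $(s_1,g^*)$: the output must equal $\pi(\cdot\mid s_1,g_1)$, which already carries positive mass on the return-losing action. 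The crucial point you are missing is that the sub-optimality has to be forced by the cross-entropy loss \emph{on a shared seen context}, not by the generalization map on an unseen one; once you arrange that, no junction state, filler trajectories, or case analysis are needed.
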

\vspace{-0.2cm}
The key idea is that cross-entropy loss of decision transformers forces non-zero probability for imitating trajectories in the dataset.
We construct a dataset containing two sub-optimal trajectories with the same total reward, then their first step information $(s_1, g_1)$ are the same.
We make their first step action $a_1$ and $a_1'$ different, so the RTG dataset contains two trajectories $(s_1, g_1, a_1)$ and $(s_1, g_1, a_1')$.
Under the assumption on generalization ability to unseen trajectory (\Cref{asp:generalization}), the decision transformer will randomly pick actions with the first step $(s_1, g^* > g_1)$.

\section{Model-Based Return-Conditioned Supervised Learning}

In this section, we ask whether we can avoid Bellman-completeness requirements, while still being able to stitch together sub-optimal data. To this end, we propose model-based return-conditioned supervised learning (MBRCSL) (cf. Algorithm \ref{alg:mbrcsl}). This technique brings the benefits of dynamic programming to RCSL without ever having to do dynamic programming backups, just forward simulation. As demonstrated in Section~\ref{sec:strength_rcsl}, when there is sufficient data coverage, RCSL can outperform DP, but typically this data coverage assumption does not hold. 

\setlength{\textfloatsep}{0.2cm}
\begin{algorithm}[t]
\caption{MBRCSL: Model-Based Return-Conditioned Supervised Learning}
\label{alg:mbrcsl}
\begin{algorithmic}[1]
\Require Offline dataset $\cD$, dynamics model $\hat T_{\theta}$, behavior policy $\mu_{\psi}$ and output return-conditioned policy $\pi_{\phi}$, desired rollout dataset size $n$.
\State Collect initial states in $\cD$ to form a dataset $\cD_{\text{initial}}$. Compute the highest return $g_{\max}$ in $\cD$.
\State Train dynamics model $\hat T_{\theta}(s',r| s,a)$ and behavior policy $\mu_{\psi}(a|s)$ on $\cD$ via MLE. 
\label{line:train_dyn_rollout}

\State Initialize rollout dataset $\cD_{\text{rollout}} \setto \varnothing$, rollout dataset size counter $cnt\setto 0$.

\For{$i=1,2,\cdots$} \label{line:rollout_beg}
\State Rollout a complete trajectory $\tau$ from $\mu_{\psi}$ and $\hat T_{\theta}$, with initial state sampled from $\cD_{\text{initial}}$. 
\State Compute the predicted total return $g$.
\If{$g > g_{\max}$} 
    \State Add $\tau$ to $\cD_{\text{rollout}}$. 
    \State $cnt \setto cnt + 1$.
    \If {$cnt == n$}
        break
    \EndIf
\EndIf
\EndFor
\label{line:rollout_end}

\State Train $\pi_{\phi}(a|s,g)$ on $\cD_{\text{rollout}}$ via MLE.
\label{line:rcsl}
    
\end{algorithmic}
\end{algorithm}

The key idea behind MBRCSL is to augment the offline dataset with trajectories that themselves combine sub-components of many different trajectories (i.e ``stitching"), on which we can then apply RCSL to acquire performance that is better than the original dataset.
Since we are in the off-policy setting and want to avoid the requirement for additional environment access, we can instead learn an approximate dynamics model from the off-policy dataset, as well as an expressive multimodal behavior policy (Line \ref{line:train_dyn_rollout}), both using standard maximum likelihood estimation (MLE). Then, by simply rolling out (sampling i.i.d and executing sequentially) the behavior policy on the approximate dynamics model, we can generate ``stitched" trajectories 
(Lines \ref{line:rollout_beg}-\ref{line:rollout_end}). In many cases, this generated data provides coverage over potentially optimal trajectory and return data, thereby enabling RCSL to accomplish better-than-dataset performance.
We aggregate optimal trajectories into a new rollout dataset by picking out trajectories with returns larger than the maximum return in the dataset. Finally, applying RCSL on the augmented rollout dataset enables trajectory stitching without the need to satisfy Bellman completeness (Line \ref{line:rcsl}). 
Another advantage of MBRCSL framework is its modular architecture, i.e., dynamics model, behavior policy and output policy architecture can be designed independently. This allows us to leverage the most performant model classes for each individual sub-component. The precise details of the architectures and optimization algorithms that empirically work best in each domain are left to the experiments (Section \ref{sec:experiment}).

\section{Experiments}
\label{sec:experiment}

In our experimental evaluation, we are primarily focused on evaluating (a) how MBRCSL performs on offline RL tasks with sub-optimal datasets, in comparison with model-free and model-based offline RL baselines? Moreover, we hope to understand (b) How each component in MBRCSL affects overall performance? Question (b) is investigated in Appendix \ref{sec:ablation} with a complete ablation study.

\subsection{Evaluation on Point Maze Environments}
\label{sec:maze_result}

We first evaluate the methods on a custom Point Maze environment built on top of D4RL Maze \citep{fu2020d4rl}. As illustrated in Fig. \ref{fig:maze}, this task requires the agent to navigate a ball from the initial location (denoted $S$) to a designated goal (denoted $G$). To investigate if the methods can do stitching, we construct an offline dataset consisting of two kinds of suboptimal trajectories with equal number:
\vspace{-0.2cm}
\begin{itemize*}
    \item A detour trajectory $S\to A \to B \to G$ that reaches the goal in a suboptimal manner. 
    \item A trajectory for stitching: $S\to M$. This trajectory has very low return, but is essential for getting the optimal policy.
\end{itemize*}
\begin{figure}[h]
\vspace{-0.3cm}
    \centering
    \subfloat{
    \includegraphics[width=0.23\columnwidth]{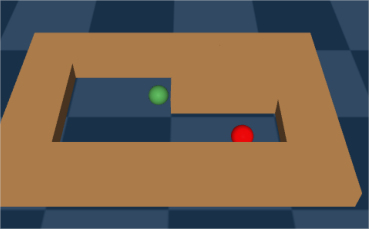}
    }~~~~~
    \subfloat{
    \includegraphics[width=0.23\columnwidth]{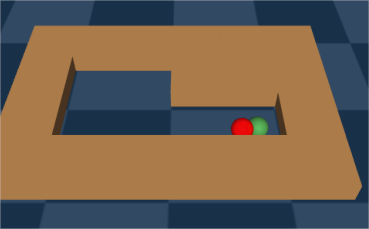}
    }~~~~~
    \subfloat{
\begin{tikzpicture}


\draw[line width=3pt,
cap=round,
rounded corners=1pt,
draw=black!75] 
(0,0) -- (4,0) 
(0,1) -- (4,1)
(0,2) -- (2,2);

\draw[line width=3pt,
cap=round,
rounded corners=1pt,
draw=black!75] 
(0,0) -- (0,2) 
(1,0) -- (1,2)
(2,0) -- (2,2)
(3,0) -- (3,1)
(4,0) -- (4,1);

\node (S) at (1.5,1.5) {$S$};
\node (M) at (1.5,0.5) {$M$};
\node (G) at (3.5,0.5) {$G$};
\node (leftup) at (0.5,1.5) {$A$};
\node (leftdown) at (0.5,0.5) {$B$};

\draw[->, line width=1pt,
cap=round,
rounded corners=1pt,
draw=red]
(1.5,1.5) -- (0.5,1.5) -- (0.5,0.5) -- (3.5,0.5);

\draw[->, line width=1pt,
cap=round,
rounded corners=1pt,
draw=blue]
(1.5,1.5) -- (1.5,0.5);
\end{tikzpicture}
    }
    \caption{Point Maze illustration. Left and middle: Simulation images at initial and final timestep. Green point represents the ball to control, red point represents the target position. Right: Maze and dataset description. The initial point is $S$ and the goal is $G$. The dataset consists of two trajectories: $S\to A \to B \to G$ and $S \to M$.
    }
    \label{fig:maze}
\end{figure}
The optimal trajectory should be a stitching of the two trajectories in dataset ($S \to M \to G$). The trajectories are generated by a scripted policy similar to that in D4RL dataset \citep{fu2020d4rl}. The resulting dataset has averaged return 40.7 and hightest return 71.8.

To answer question (a), we compare MBRCSL with several offline RL baselines. (1) \textbf{CQL} \citep{kumar2020conservative} is a model-free offline RL method that learns a conservative Q function penalizing high values for out of distribution actions. (2) \textbf{MOPO} \citep{yu2020mopo} is a model-based offline RL method that learns an ensemble dynamics model and performs actor-critic learning using short-horizon model-rollouts. (3) \textbf{COMBO} \citep{yu2021combo} combines CQL and MOPO by jointly learning a conservative Q function and a dynamics model. (4) \textbf{MOReL} \citep{kidambi2020morel} first learns a pessimistic MDP from dataset and then learns a near-optimal policy in the pessimistic MDP. (5) \textbf{Decision Transformer (DT)} \citep{chen2021decision} is a RCSL method that leverages a transformer model to generate return-conditioned trajectories. (6) \textbf{\%BC} performs behavior cloning on trajectories from the dataset with top 10\% return. 

We implement MBRCSL with the following architectures:
\vspace{-0.2cm}
\begin{itemize*}
    \item Dynamics model learns a Gaussian distribution over the next state and reward: $\hat T_{\theta}(s_{t+1},r_t | s_t,a_t) = \cN\Sp{(s_{t+1},r_t); \mu_{\theta}(s_t,a_t), \Sigma_{\theta}(s_t,a_t)}$. We learn an ensemble of dynamics models, in which each model is trained independently via maximum-likelihood. This is the same as many existing model-based approaches such as MOPO \citep{yu2020mopo} and COMBO \citep{yu2021combo}
    \item We learn an approximate behavior policy  with a diffusion policy architecture \citep{chi2023diffusion}, which is able to capture multimodal action distributions. We fit the noise model by minimizing mean squared error (MSE). 
    \item The return-conditioned policy is represented by a MLP network and trained by minimizing MSE.
\end{itemize*}

As shown in Table \ref{tab:point-maze}, MBRCSL outperforms all baselines on the Point Maze environment, successfully stitching together suboptimal trajectories. COMBO, MOPO, MOReL and CQL struggles to recover the optimal trajectory, potentially due to challenges with Bellman completeness. They also have large variance in performance. In addition, we show in \Cref{sec:bellman_experiment} that performance of DP-based offline RL methods cannot be improved by simply increasing model size. DT and \%BC have low variance, but they cannot achieve higher returns than that of datset due to failure to perform stitching. 

\vspace{-0.1cm}
\begin{table}[ht]
\caption{Results on Point Maze. We record the mean and standard deviation (STD) on 4 seeds. The dataset has averaged return 40.7 and highest return 71.8.
}
\centering
\vspace{-0.2cm}
\scalebox{0.85}{
\begin{tabular}{|l || c | c | c | c | c | c | c |}
 \hline
 Task & MBRCSL (ours) & COMBO & MOPO & MOReL & CQL & DT & \%BC  \\
 \hline
 Pointmaze & \textbf{91.5$\pm$7.1} & 56.6$\pm$50.1 & 77.9$\pm$41.9 & 
 26.4$\pm$28.1&34.8$\pm$ 24.9 & 57.2$\pm$4.1 & 54.0$\pm$9.2 \\
 \hline
\end{tabular}
}
\label{tab:point-maze}
\end{table} 

\subsection{Evaluation on Simulated Robotics Environments}

We also evaluate the methods on three simulated robotic tasks from \cite{singh2020cog}. In each task, a robot arm is required to complete some goal by accomplishing two phases of actions, specified as follows:
\begin{itemize*}
    \item (\textbf{PickPlace}) (1) Pick an object from table; (2) Place the object into a tray.
    \item (\textbf{ClosedDrawer}) (1) Open a drawer; (2) Grasp the object in the drawer.
    \item (\textbf{BlockedDrawer}) (1) Close a drawer on top of the target drawer, and then open the target drawer; (2) Grasp the object in the target drawer.
\end{itemize*}

We take images from \cite{singh2020cog} for environment illustration (cf. Figure \ref{fig:pickplace}). For each task, the associated dataset consists of two kinds of trajectories: (\romannumeral1) Prior trajectories that only perform phase 1; (\romannumeral2) Task trajectories which only perform phase 2, starting from the condition that phase 1 is completed. For PickPlace, ClosedDrawer and BlockedDrawer, the prior dataset completes phase 1 with probability about 40\%, 70\% and 90\% respectively. The task dataset has success rate of about 90\% for PickPlace and 60\% for both ClosedDrawer and BlockedDrawer. The agent must stitch the prior and task trajectory together to complete the task. All three tasks have sparse reward, with return 1 for completing phase 2, and 0 otherwise. Comparing to Point Maze, the simulated robotic tasks require more precise dynamics estimates.

\begin{wrapfigure}{r}{7cm}
    \centering
    \includegraphics[width=7cm]{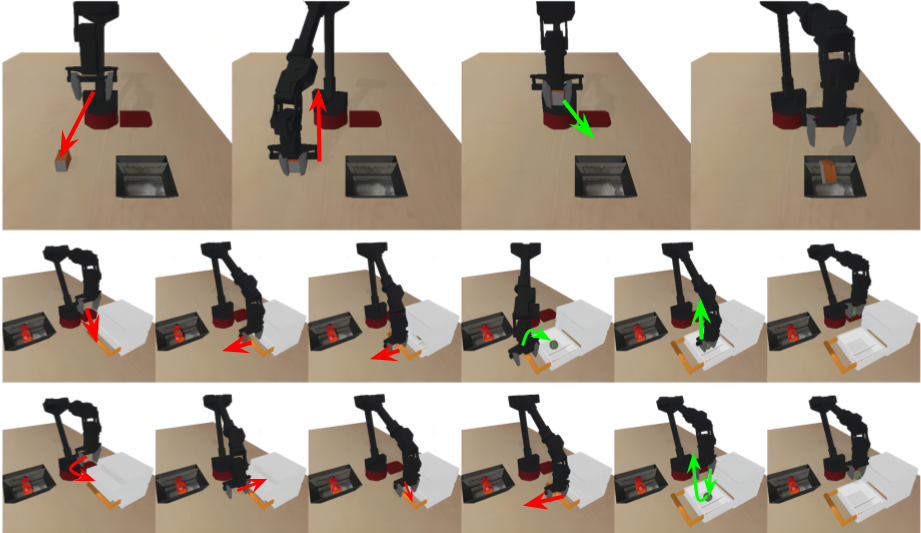}
    \caption{Simulated robotic tasks illustrations from CoG \citep{singh2020cog}. The rows represent PickPlace, ClosedDrawer and BlockedDrawer, respectively, from top to bottom. 
    The dataset for each task consists of two kinds of trajectories: Some perform actions in red arrows and others perform actions in green arrows. A task is completed if and only if actions in both red and green arrows are performed successfully.}
    \label{fig:pickplace}   
\end{wrapfigure}

We choose \textbf{CQL} \citep{kumar2020conservative},  \textbf{COMBO} \citep{yu2021combo} and  \textbf{Decision Transformer (DT)} \citep{chen2021decision} as baselines of model-free, model-based and RCSL methods, respectively. To demonstrate performance improvement of the output policy over the behavior policy in MBRCSL, we also tested behavior cloning implemented with diffusion policy (denoted by \textbf{BC}), which has the same architecture as the behavior policy in MBRCSL. 

To capture the more complicated dynamics, we implemented the dynamics and output return-conditioned policy with the following architectures:
\vspace{-0.2cm}
\begin{itemize*}
    \item Dynamics model is represented with a transformer model \citep{vaswani2017attention} $\hat T_{\theta}(s', r| s,a)$, which is trained to minimize cross-entropy loss.
    \item The output return-conditioned policy is represented by an autoregressive model \citep{korenkevych2019autoregressive} trained by MLE.  
\end{itemize*}

As shown in Table \ref{tab:pickplace}, MBRCSL outperforms all baseline methods in all three tasks. CQL achieves nonzero while low success rates and suffers from high variance in two of the tasks. This is possibly because Bellman completeness and sparse reward impede correct $Q$-function estimation. COMBO fails to complete the task at all, potentially due to inaccurate model rollouts incurring a higher $Q$ estimation error. 
DT fails to extract meaningful behavior because of a lack of expert trajectory. We also found that compared to the behavior policy (BC) which tries to capture all possible behaviors, the output policy (MBRCSL) is able to extract successful trajectories from rollouts and results in a higher success rate. 
\vspace{-0.2cm}
\begin{table}[h]
\caption{Results on simulated robotic tasks. We record the mean and STD of the success rate on 4 seeds.}
\label{tab:pickplace}
\centering
\vspace{-0.2cm}
\begin{tabular}{|l || c | c | c | c | c |}
 \hline
 Task & MBRCSL (ours) & CQL & COMBO & DT & BC \\
 \hline
 PickPlace & \textbf{\textbf{0.48$\pm$0.04}} & 0.22$\pm$0.35 & 0$\pm$0 & 0$\pm$0 & 0.07$\pm$ 0.03 \\
 ClosedDrawer & \textbf{0.51$\pm$0.12} & 0.11$\pm$0.08 & 0$\pm$0 &0$\pm$0 & 0.38$\pm$0.02 \\
 BlockedDrawer & \textbf{0.68$\pm$0.09} & 0.34$\pm$0.23 & 0$\pm$0 &0$\pm$0 & 0.61$\pm$0.02 \\
 \hline
\end{tabular}
\end{table}
\vspace{-0.1cm}

\section{Conclusion}
In this work, we conduct a thorough analysis of off-policy reinforcement learning, theoretically showing how return-conditioned supervised learning algorithms can provide a benefit over typical dynamic programming-based methods for reinforcement learning under function approximation through the viewpoint of Bellman completeness. We show that with data coverage, RCSL style algorithms do not require Bellman completeness, simply the easier realizability condition. We then characterize the limitations of RCSL in its inability to accomplish better-than-dataset behavior through trajectory stitching. To remedy this, while still retaining the benefit of avoiding Bellman completeness, we propose a novel algorithm - MBRCSL, that is able to accomplish data coverage through i.i.d forward sampling using a learned dynamics model and show its benefits. Notable open problems here concern how to make MBRCSL work in stochastic environments and scale these algorithms up to more complex and large-scale environments. 

\section*{Acknowledgements}
SSD acknowledges the support of NSF IIS 2110170, NSF
DMS 2134106, NSF CCF 2212261, NSF IIS 2143493,
NSF CCF 2019844, NSF IIS 2229881. CZ is supported by the UW-Amazon Fellowship. 
\bibliography{iclr2024_conference}
\bibliographystyle{iclr2024_conference}

\appendix

\newpage

\section{Supplementary Results for Section \ref{sec:strength_rcsl}}
\label{sec:proof_strength_rcsl}

\subsection{Counter-Example}
\label{sec:linearq_def}
\textbf{LinearQ Definition.} LinearQ is a class of MDPs, i.e., $\text{LinearQ} = \Bp{\cM^u \mid u \in \N^*}$. Each MDP instance $\cM^u$ of LinearQ class is associated with a ``size parameter'' $u$ and has state space $\cS^u=\Bp{0,1,\cdots,3u+2}$, action space $\cA=\Bp{0,1}$, and horizon $H^u=\abs{\cS^u} = 3u+3$. The initial state is fixed to $0$. 

$\cM^u$ has a deterministic transition function $\cT^u: \cS^u\times \cA \to \cS^u$, defined by 
\begin{align*}
    & \cT^u(s,a=0) = \begin{cases}
        s+1 & 0\leq s\leq u \\
        3u+2 & u+1\leq s \leq 2u \text{ and $s$ is even} \\
        3u+1 & u+1\leq s \leq 2u \text{ and $s$ is odd} \\
        3u+2 & 2u+1\leq s\leq 3u+2,
        \end{cases} \\
    &  \cT^u(s,a=1) = \begin{cases}
        3u+2 & 0\leq s \leq u \text{ and $s$ is even} \\
        3u+1 & 0\leq s \leq u \text{ and $s$ is odd} \\
        s+1 & u+1\leq s\leq 3u+1 \\
        3u+2 & s=3u+2.
        \end{cases}
\end{align*}
The reward function $r^u: \cS^u\times \cA \rightarrow [0,1]$ is defined by 
\begin{align*}
    & r^u(s,a=0) = \begin{cases}
        2k^u & 0\leq s\leq u-1 \\
        1.5k^u & s = u \\
        (-2s+4u+2)k^u & u+1\leq s \leq 2u \text{ and $s$ is even} \\
        (-2s+4u+1.5)k^u & u+1\leq s \leq 2u \text{ and $s$ is odd} \\
        0 & 2u+1\leq s\leq 3u+2,
        \end{cases} \\
    &  r^u(s,a=1) = \begin{cases}
        (-s+3u+1.5)k^u & 0\leq s \leq u \text{ and $s$ is even} \\
        (-s+3u+1)k^u & 0\leq s \leq u \text{ and $s$ is odd} \\
        k^u & u+1\leq s\leq 3u+1 \\
        0 & s=3u+2.
        \end{cases}
\end{align*}
Here $k^u=\frac{1}{3u+1}$ is used to scale the reward to range $[0,1]$.

\noindent \textbf{Properties.}
The name ``LinearQ'' comes from its $Q$-function, which can be represented by a ReLU function (\Cref{fig:linear_Q} right and \Cref{lem:linearq}).

\begin{lemma} 
\label{lem:linearq}
For any MDP $\cM^u \in \text{LinearQ}$ with size parameter $u$, the optimal $Q$-function is 
\begin{align*}
    Q^{u,*}(s,a) = \begin{cases}
        2k^u\cdot \relu(-s+2u+1) & a = 0 \\
        k^u\cdot \relu(-s+3u+1.5) & a= 1.
    \end{cases} 
\end{align*}
As a result, the optimal value function is
\begin{align*}
    V^{u,*}(s) = \begin{cases}
        Q^{u,*}(s,a=0) = 2k^u(-s+2u+1) & 0\leq s\leq u \\
        Q^{u,*}(s,a=1) = k^u(-s+3u+1.5) & u+1\leq s\leq 3u+1 \\
        0 & s = 3u+2,
    \end{cases}
\end{align*}
and the optimal policy $\pi^{u,*}: \cS^u\to \cA$ is
\begin{align}
    \pi^{u,*}(s) = \begin{cases}
        0 & 0\leq s \leq u \\
        1 & u+1\leq s\leq 3u+2
    \end{cases}
    \label{eq:pi_star}
\end{align}
with optimal return $V^{u,*}(0)=2k^u(2u+1)$.
\end{lemma}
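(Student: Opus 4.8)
The plan is to verify Lemma~\ref{lem:linearq} by a direct backward-induction computation of $Q^{u,*}$ and $V^{u,*}$, exploiting the layered structure of the transition graph of $\cM^u$. First I would observe that the state space splits into three ``bands'': the top band $\{0,\dots,u\}$ along which $a=0$ advances $s\mapsto s+1$; the middle band $\{u+1,\dots,2u\}$; and the bottom band $\{2u+1,\dots,3u+1\}$, together with the two absorbing-type sink states $3u+1,3u+2$ (note $3u+2$ is absorbing under both actions with zero reward, so $V^{u,*}(3u+2)=0$, which anchors the induction). Because $H^u=\abs{\cS^u}=3u+3$ equals the number of states and every non-$a=0$-in-the-top-band move either advances one step or jumps straight to a sink, the horizon is exactly long enough that the ``claimed optimal policy'' $\pi^{u,*}$ in \eqref{eq:pi_star} --- take $a=0$ until leaving the top band, then always $a=1$ --- traces out a single well-defined path, and I would compute the return along that path in closed form to get the candidate $V^{u,*}$.

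The core of the proof is then to check the Bellman optimality equations
\[
Q^{u,*}(s,a) = r^u(s,a) + V^{u,*}(\cT^u(s,a)), \qquad V^{u,*}(s) = \max_{a\in\{0,1\}} Q^{u,*}(s,a),
\]
with the proposed ReLU formulas, since these equations together with $V^{u,*}(3u+2)=0$ uniquely determine the optimal functions in a finite deterministic MDP. I would proceed band by band. For $s$ in the bottom band, $a=1$ gives $s\mapsto s+1$ with reward $k^u$, so I'd check the arithmetic progression $V^{u,*}(s)=k^u(-s+3u+1.5)$ telescopes correctly down to the sink, and that $a=0$ (which either stays at $0$ or jumps to a sink with reward $0$) is no better. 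For $s$ in the middle band, both actions lead toward the bottom band or a sink; here the reward coefficients like $(-2s+4u+2)k^u$ are precisely engineered so that $r^u(s,0)+V^{u,*}(\cT^u(s,0))$ collapses to the same linear expression, and I'd confirm $a=1$ is optimal. For $s$ in the top band, $a=0$ yields $s\mapsto s+1$ with reward $2k^u$, giving the telescoping $V^{u,*}(s)=2k^u(-s+2u+1)$, while $a=1$ jumps to a sink with reward $(-s+3u+1.5)k^u$ or $(-s+3u+1)k^u$; I'd verify $2k^u(-s+2u+1)\ge$ that jump reward on $0\le s\le u$, which is where the constant $1.5$ versus $1$ offsets (parity corrections) matter, and check the boundary case $s=u$ where $r^u(u,0)=1.5k^u$ makes the recursion match the middle band. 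The $\relu$ wrapping is then just the statement that $Q^{u,*}(s,0)=0$ for $s\ge 2u+1$ and $Q^{u,*}(s,1)=0$ for $s\ge 3u+2$, matching the zero-reward sink region.

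I expect the main obstacle to be bookkeeping the parity cases cleanly: the transition and reward functions branch on whether $s$ is even or odd (because the odd/even states in the top and middle bands get routed to different sinks, $3u+1$ vs.\ $3u+2$), and the $0.5$ shifts in the reward coefficients exist exactly to absorb the difference $V^{u,*}(3u+1)-V^{u,*}(3u+2) = 0.5k^u$. So the ``hard part'' is really a careful case split showing that after adding the appropriate reward the two parities yield the same linear formula for $Q^{u,*}$ and $V^{u,*}$; once the algebra is organized so that ``jump to $3u+1$ with reward $c+0.5k^u$'' and ``jump to $3u+2$ with reward $c$'' are seen to give identical values, the rest is routine telescoping. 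A clean way to present this is to first establish $V^{u,*}(3u+1)=k^u\cdot 0.5$ and $V^{u,*}(3u+2)=0$ as base cases, then induct downward on $s$, handling the three bands in order $s=3u+1\downarrow 2u+1$, then $2u\downarrow u+1$, then $u\downarrow 0$, invoking the already-proved values at larger indices at each step, and finally reading off $\pi^{u,*}$ and $V^{u,*}(0)=2k^u(2u+1)$.
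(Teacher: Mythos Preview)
Your proposal is correct and takes essentially the same approach as the paper: a backward induction on the state $s$ from the absorbing sink $3u+2$ down to $0$, verifying the Bellman optimality equation in each case and using the parity-dependent $0.5k^u$ reward offsets to make the two jump targets $3u+1$ and $3u+2$ yield identical $Q$-values. Your band-by-band organization and explicit identification of $V^{u,*}(3u+1)=0.5k^u$, $V^{u,*}(3u+2)=0$ as base cases is in fact slightly cleaner than the paper's, since the paper nominally states its hypothesis only for the single state $n$ while tacitly using the already-established values at $3u+1$ and $3u+2$ throughout.
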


\begin{proof}
We prove by induction on state $s$. When $s=3u+2$, the agent will always be in state $3u+2$ with reward 0, so $Q^{u,*}(3u+2,a)=0$, for all $a\in \cA$. 

Now suppose $Q^{u,*}(n,0) = 2k^u\cdot \relu(-n+2u+1)$ and $Q^{u,*}(n,1) = k^u\cdot \relu(-n+3u+1.5)$ for some integer $n$ in $[1,3u+2]$. Then consider state $s=n-1 \in [0,3u+1]$. For simplicity, we use $s'$ to denote the next state given state $s$ and action $a$. If $a=0$, there are 5 possible cases:
\begin{itemize}
    \item $0\leq s\leq u-1$. Then $s'=s+1\le u$, and 
    \begin{align*}
    Q^{u,*}(s,0)=r^u(s,0)+V^{u,*}(s') = r^u(s,0) + Q^{u,*}(s+1,0) = 2k^u(-s+2u+1).
    \end{align*}
    \item $s=u$. Then $s'=u+1$, and 
    \begin{align*}
    Q^{u,*}(s,0)=r^u(s,0)+V^{u,*}(s') = r^u(s,0) + Q^{u,*}(u+1,1) = 2k^u(-s+2u+1).
    \end{align*}
    \item $u+1\leq s\leq 2u$ and $s$ 
 is even. Then $s'=3u+2$, and 
    \begin{align*}
    Q^{u,*}(s,0)=r^u(s,0)+V^{u,*}(s') = r^u(s,0) + 0 = 2k^u(-s+2u+1).
    \end{align*}
    \item $u+1\leq s\leq 2u$ and $s$ 
 is odd. Then $s'=3u+1$, and 
    \begin{align*}
    Q^{u,*}(s,0)=r^u(s,0)+V^{u,*}(s') = r^u(s,0) + Q^{u,*}(3u+1,1) = 2k^u(-s+2u+1).
    \end{align*}
    \item $2u+1\leq s\leq 3u+1$. Then $s'=3u+2$, and 
    \begin{align*}
    Q^{u,*}(s,0)=r^u(s,0)+V^{u,*}(s') = r^u(s,0) + 0 = 0.
    \end{align*}
\end{itemize}
If $a=1$, there are 3 possible cases:
\begin{itemize}
    \item $0\leq s\leq u$ and $s$ 
 is even. Then $s'=3u+2$, and 
    \begin{align*}
    Q^{u,*}(s,1)=r^u(s,1)+V^{u,*}(s') = r^u(s,1) + 0 =  k^u(-s+3u+1.5).
    \end{align*}
    \item $0\leq s\leq u$ and $s$ 
 is odd. Then $s'=3u+1$, and 
    \begin{align*}
    Q^{u,*}(s,1)=r^u(s,1)+V^{u,*}(s') = r^u(s,1) + Q^{u,*}(3u+1,1) = k^u(-s+3u+1.5).
    \end{align*}
    \item $u+1\leq s\leq 3u+1$. Then $s'=s+1 \ge u$, and 
    \begin{align*}
    Q^{u,*}(s,1)=r^u(s,1)+V^{u,*}(s') = r^u(s,1) + Q^{u,*}(s+1,1) =  k^u(-s+3u+1.5).
    \end{align*}
\end{itemize}
Therefore, $Q^{u,*}(s=n-1,0) = 2k^u\cdot \relu(-(n-1)+2u+1)$ and $Q^{u,*}(s=n-1,1) = k^u\cdot \relu(-(n-1)+3u+1.5)$. Thus we complete the proof by induction.
\end{proof}

\noindent 
\textbf{Dataset.} The dataset $\cD^u$ contains $4(3u+3)n$ trajectories, where $n \in \N^*$ is a hyper-parameter for dataset size. $3(3u+3)n$ trajectories are sampled by the optimal policy $\pi^{u,*}$.
We further define $(3u+3)$ ``one-step-deviation'' policy $\pi^{u,t}(s)$ ($t\in \cS$), with each policy sampling $n$ trajectories:
\begin{align*}
    \pi^{u,t}(s) = \begin{cases}
        \pi^{u,*}(s) & s \neq t \\
        1-\pi^{u,*}(s) & s = t.
    \end{cases}
\end{align*}

\textbf{Simulation Experiment}
\label{sec:linearq_sim}
We test the performance of MLP-RCSL and $Q$-learning with various network size, and record the gap between achieved return and optimal return. During evaluation process of MLP-RCSL, we input the exact optimal return as desired RTG.  We also present the performance of a naive policy as baseline, which always takes action 0 regardless of current state. Experiment details and hyperparameter choices are listed in Appendix \ref{sec:supp_linearq}. As presented in Table \ref{tab:linearq}, MLP-RCSL reaches optimal behavior with constant step size, while $Q$-learning's best performance only matches the naive policy, even if the network hidden layer size increases linearly with state space size.

\begin{table}[t]
\caption{\textbf{Simulation results for LinearQ} with different size parameter $u$. ``Naive'' stands for the policy that always takes action 0. ``RCSL-$w$'' means MLP-RCSL with hidden layer size $w$, ``QL-$w$'' means $Q$-learning with hidden layer size $w$. We report the difference between achieved return on the last training epoch and the optimal return. RCSL achieves optimal return with 16 hidden layers, while $Q$-learning only achieves return same as a naive policy.}
\label{tab:linearq}
\begin{center}
\begin{tabular}{|l || c |c | c | c | c | c | c|}
 \hline
$u$ & Naive &  RCSL-16 & QL-16 & QL-$0.5u$ & QL-$u$ & QL-$2u$ & QL-$4u$\\
 \hline
 16 & 1 & 0 & 1 & 1 & 1 & 1 & 1 \\
 \hline 
 32 & 1 & 0 & 1 & 1 & 1 & 32.5 & 1 \\
 \hline 
 48 & 1 & 0 & 1 & 1 & 1 & 1 & 1 \\
 \hline 
 64 & 1 & 0 & 1 & 1 & 1 & 1 & 1 \\
 \hline 
 80 & 1  & 0 & 1 & 1 & 1 & 1 & 1 \\
 \hline 
 96 & 1 & 0 & 1 & 1 & 1 & 1 & 50.5 \\
 \hline 
 112 & 1 & 0 & 1 & 1 & 1 & 1 & 1 \\
 \hline 
 128 & 1 & 0 & 1 & 1 & 1 & 1 & 1 \\
 \hline 
 144 & 1 & 0 & 1 & 1 & 126 & 1 & 1 \\
 \hline 
 160 & 1 & 0 & 1 & 1 & 1 & 1 & 1 \\
 \hline
\end{tabular}
\end{center}
\end{table}

\subsection{Proof for Theorem \ref{thm:strength_rcsl}}
Consider LinearQ (\Cref{sec:linearq_def}) as our counterexample, then ($\romannumeral1$) and ($\romannumeral2$) hold directly.

For the rest of the theorem, we need to first introduce some basic lemmas concerning the relationship of ReLU functions with indicator function in integer domain. The results below are easy to verify and thus we omit the proof.
\begin{lemma}
\label{lem:relu_ind}
    For any $x,u\in \Z$, we have
    \begin{align*}
     \1_{x\leq u} =& -\relu(-x+u) + \relu(-x+u+1) \\
    \1_{x\geq u} =& -\relu(x-u) + \relu(x-u+1) \\
    \1_{x = u} =& -\relu(-x+u) + \relu(-x+u+1) \\ 
    & -\relu(x-u)
    + \relu(x-u+1) -1.
    \end{align*}
\end{lemma}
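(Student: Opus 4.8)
The plan is to reduce all three identities to a single integer variable and a short sign analysis. I would set $m := x - u \in \Z$, so that the three claims become $\1_{m \le 0} = -\relu(-m) + \relu(-m+1)$, $\1_{m \ge 0} = -\relu(m) + \relu(m+1)$, and $\1_{m = 0} = -\relu(-m) + \relu(-m+1) - \relu(m) + \relu(m+1) - 1$. The point at which integrality of $m$ is essential is worth isolating up front: it guarantees that if $m \ge 1$ then $-m+1 \le 0$, and if $m \le -1$ then $m+1 \le 0$, so the relevant ReLU terms vanish exactly rather than leaving a fractional remainder (the identities genuinely fail for non-integer $m \in (0,1)$).

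For the first identity I would split on the sign of $m$. If $m \le 0$, then $-m \ge 0$ and $-m+1 \ge 1 > 0$, so $\relu(-m) = -m$ and $\relu(-m+1) = -m+1$, giving $-(-m) + (-m+1) = 1 = \1_{m\le 0}$. If $m \ge 1$, then $-m \le -1 < 0$ and (by integrality) $-m+1 \le 0$, so both ReLU terms vanish and the right-hand side is $0 = \1_{m \le 0}$. The second identity follows immediately by applying the first with the roles of $x$ and $u$ interchanged: replacing $x-u$ by $u-x$ turns $\1_{x-u \le 0}$ into $\1_{x \ge u}$ and the right-hand side into $-\relu(x-u) + \relu(x-u+1)$.

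For the third identity I would avoid a fresh case analysis and instead add the first two. Summing the left-hand sides gives $\1_{m \le 0} + \1_{m \ge 0}$, which equals $2$ when $m = 0$ and $1$ when $m \ne 0$; subtracting $1$ therefore yields exactly $\1_{m = 0}$, while the sum of the right-hand sides is precisely the stated four-term ReLU expression. Substituting $m = x-u$ back into all three recovers the displayed formulas. There is no real obstacle here — the whole lemma is routine — so the ``hard part'' is only bookkeeping: making sure the boundary values $m = 0$ and $m = \pm 1$ land on the correct side of each inequality, which is exactly where the integrality hypothesis is invoked.
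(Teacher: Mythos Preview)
Your proposal is correct; the paper itself omits the proof entirely, remarking only that the identities are easy to verify. Your substitution $m = x - u$, the sign-based case split for the first identity, the symmetry argument for the second, and the observation $\1_{m\le 0} + \1_{m\ge 0} - 1 = \1_{m=0}$ for the third are all sound, and you correctly flag the one place integrality is used (ruling out fractional $m \in (0,1)$).
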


\begin{proof}[Proof of $(\romannumeral3)$ in Theorem \ref{thm:strength_rcsl}]
We first observe some critical facts of the dataset $\cD^u$. For any state $s$, its sub-optimal action $1-\pi^{u,*}(s)$ only appears in trajectory sampled by policy $\pi^{u,s}$, which achieves RTG $Q^{u,*}(s,1-\pi^{u,*}(s))$ at state $s$, since actions after state $s$ are all optimal. Thus a return-conditioned policy $\pi(s,g)$ of the form 
\begin{align*}
    \pi(s,g) = \begin{cases}
        \pi^{u,*}(s) & g \neq Q^{u,*}(s,1-\pi^{u,*}(s)) \\
        1-\pi^{u,*}(s) & g =Q^{u,*}(s,1-\pi^{u,*}(s))
    \end{cases}
\end{align*}
achieves zero training error on $\cD^u$.

Define $f(x) = \1_{x\geq 0.5}, \forall x\in \R$, which is the output projection function in MLP-RCSL. We show that 
\begin{align}
    \pi(s,g) = f(\1_{g+s=3u+1.5} - \1_{g+s=2u+1} - 2\1_{g=0} - \1_{s\leq u} + 1)
    \label{eq:represent_pi}
\end{align}
via case analysis on state $s$ (MLP only needs to represent the input to $f$):
\begin{itemize}
    \item $0\leq s\leq u$. Then the optimal action is $\pi^{u,*}(s) = 0$. We have $\1_{s\leq u} = 1$. In addition, we have $\1_{g=0}=0$, because the reward on state $s\in [0,u]$ is always positive and thus the RTG $g$ must be positive. 
    
    Now if $g+s=3u+1.5$, then $\1_{g+s=2u+1} = 0$, and RHS of (\ref{eq:represent_pi}) is 1, which equals $\pi(s,g)$ as $g=Q^{u,*}(s,1)$. If $g+s\neq 3u+1.5$, then RHS of (\ref{eq:represent_pi}) is always 0, which is also the same as $\pi(s,g)$.

    \item $u+1\leq s\leq 2u+1$. Then the optimal action is $\pi^{u,*}(s) = 1$. We have $\1_{s\leq u} = 0$. In addition, we have $\1_{g=0}=0$, because the reward on state $s\in [0,u]$ is always positive and thus the RTG $g$ must be positive. 
    
    Now if $g+s=2u+1$, then $\1_{g+s=3u+1.5} = 0$, and RHS of (\ref{eq:represent_pi}) is 0, which equals $\pi(s,g)$ as $g=Q^{u,*}(s,0)$. If $g+s\neq 3u+1.5$, then RHS of (\ref{eq:represent_pi}) is always 1, which is also the same as $\pi(s,g)$.

    \item $2u+2\leq s\leq 3u+2$. Then the optimal action is $\pi^{u,*}(s) = 1$. We have $\1_{s\leq u} = 0$. In addition, we have $\1_{g+s=2u+1} = 0$, because $g+s\geq g\leq 2u+2$.
    
    Now if $g=0$, then RHS of (\ref{eq:represent_pi}) is always 0, which equals $\pi(s,g)$ as $g=Q^{u,*}(s,0)$. If $g\neq 0$, then RHS of (\ref{eq:represent_pi}) is always 1, which is also the same as $\pi(s,g)$.
\end{itemize}
Therefore, \Cref{eq:represent_pi} holds.
By \Cref{lem:relu_ind}, each indicator function in (\ref{eq:represent_pi}) can be represented with 4 $\relu$ functions with input $s$ and $g$.
As a result, we can achieve zero training error with $16$ hidden neurons.
\end{proof}

We can show that the optimal $Q$-function can be represented by $2$ hidden neurons.
Notice that we have for all $s\in \cS$ and $a\in \cA = \{0,1\}$ that
\begin{align*}
    Q^{u,*}(s,a) 
    & = 2k^u \cdot \relu(-s+2u+1)\cdot \1_{a=0} + k^u\cdot \relu(-s+3u+1.5) \cdot \1_{a=1} \\
    & = 2k^u\cdot \relu(-s-(2u+1)a+2u+1) + k^u\cdot \relu(-s+(3u+1.5)a).
\end{align*}
Thus $Q^{u,*}(s,a)$ can be represented by $2$ hidden neurons, i.e. $\relu(-s-(2u+1)a+2u+1)$ and $ \relu(-s+(3u+1.5)a)$.

\begin{proof}[Proof of $(\romannumeral4)$ in Theorem \ref{thm:strength_rcsl}]
We just need to consider the case when $a=0$. Notice that for $s\in [u+1,2u]$, we have 
\begin{align*}
    r^u(s,a=0) &=Q^{u,*}(s,a=0)-V^{u,*}(\cT^u(s,a=0)) \\
    & = \begin{cases}
        k^u(-2s+4u+2) & s \text{ is even} \\
        k^u(-2s+4u+1.5) & s \text{ is odd}.
    \end{cases}
\end{align*}
For any array $\{v(k)\}_{k=l_1}^{l_2}$ ($l_1\leq l_2-2$), we recursively define the $t$-th order difference array of $\{v(k)\}_{k=l_1}^{l_2}$ ($t\leq l_2-l_1$), denoted by $\Bp{\Delta^{(t)}v(k)}_{k=l_1}^{l_2-t}$, as follows:
\begin{align*}
\begin{cases}
    \Delta^{(t)}v(k) = \Delta^{(t-1)}v(k+1)-\Delta^{(t-1)}v(k), & \forall t\geq 1, l_1\leq k\leq l_2-t \\
    \Delta^{(0)}v(k) = v(k), & \forall l_1\leq k \leq l_2.
\end{cases}
\end{align*}
We fix $u$ and construct an array $\Bp{R(k)}_{k=u+1}^{2u}$ where $R(k) = r^u(s=k,a=0)$ for $k\in \{u,u+1,\cdots,2u\}$. Then its second-order difference array is 
\begin{align*}
    \Delta^{(2)}R(k) = (-1)^k, \forall u+1\leq k \leq 2u-2,
\end{align*}
which has $u-2$ nonzero terms. 

On the other hand, the $Q$ network must be able to represent reward function in order to fulfill Bellman completeness, as is discussed in proof idea of Section \ref{sec:strength_rcsl}. Suppose the $Q$-network has $m$ hidden neurons. Then it can represent functions of the form 
\begin{align*}
    f(s,a) = \sum_{i=1}^m w_i \relu(\alpha_i s + \beta_i a + c_i) + c,
\end{align*}
where $w_i,\alpha_i,\beta_i,c_i$ ($1\leq i\leq m$) as well as $c$ are all real numbers. Define array $\Bp{\hat R(k)}_{k=u+1}^{2u}$ and a series of arrays $\Bp{L_i(k)}_{k=u+1}^{2u}$ ($1\leq i\leq m$), in which $\hat R(k) = f(s=k,a=0)$ and $L_i(k) = \relu(\alpha_i k + c_i)$ for any $k\in \{u,u+1,\cdots,2u)$, $i\in [m]$. Then the  second-order difference array of $\Bp{\hat R(k)}_{k=u+1}^{2u}$ is 
\begin{align*}
    \Delta^{(2)} \hat R(k) = \sum_{i=1}^m w_i \Delta^{(2)}L_i(k), \forall u+1\leq k\leq 2u-2.
\end{align*}
Notice that for any $i\in [m]$, there exists at most two $k$'s in $[u+1,2u+2]$, such that $\Delta^{(2)}L_i(k) \neq 0$. This indicates that $\Delta^{(2)} R(k)$ has at most $2m$ non-zero terms. 

To make $f(s,a)=r(s,a)$ for any $s\in \cS^u$, $a\in \cA$, we must have $\{R(k)\}_{k=u+1}^{2u} = \{\hat R(k)\}_{k=u+1}^{2u}$, and thus $\{\Delta^{(2)} R(k)\}_{k=u+1}^{2u} =  \{\Delta^{(2)} \hat R(k)\}_{k=u+1}^{2u}$. Therefore, we have $2m \geq u-2$, indicating that there are at least $\frac{u-2}{2} = \Omega(\abs{\cS^u})$ hidden neurons.
\end{proof}

\subsection{Experiment}
We empirically validate the claim of \Cref{thm:strength_rcsl} on an environment with continuous state and action space. To be specific, 
We compared RCSL and CQL on Point Maze task (\Cref{sec:maze_result}) with expert dataset (referred to as "\textbf{Pointmaze expert}"). The expert dataset contains two kinds of trajectories: 1) $S\to A \to B$ and 2) $S\to M \to G$ (optimal trajectory). Please refer to \Cref{fig:maze} for the maze illustration. The highest return in dataset is 137.9. 

Both RCSL policy and CQL Q-network are implemented with three-layer MLP networks, with every hidden layer width equal to hyper-parameter ``dim''. We recorded the averaged returns on 4 seeds under different choices of `dim`.

\begin{table}[ht]
\caption{Comparison of RCSL and CQL on Pointmaze expert under different model size. We recorded the averaged return on 4 seeds.
}
\label{tab:maze_expert}
\centering
\begin{tabular}{|l || c | c | c | c | c |}
 \hline
  & dim=64 & dim=128 & dim=256 & dim=512 & dim=1024 \\
 \hline
 RCSL & 89.6 & 97.8 & 93.3 & 113 & 105 \\
 \hline
 CQL &   11.5   &  11.7   & 14.7     & 37.8 & 23 \\
 \hline
\end{tabular}
\end{table}

As shown in \Cref{tab:maze_expert}, CQL performance is much worse than RCSL on all model sizes. This is because Bellman completeness is not satisfied for CQL, even with a large model. In contrast, realizability of RCSL can be easily fulfilled with a relatively small model. The experiment supports the theoretical result of \Cref{thm:strength_rcsl} that RCSL can outperform DP-based offline RL methods on deterministic environments, as RCSL avoids Bellman completeness requirements.

\section{Proofs for \Cref{sec:lim_ctx}}

\subsection{Proof of Theorem \ref{thm:ctx1}}
\label{sec:proof_ctx1}

\textbf{Construction of MDPs.} Fix any integer constant $H_0\geq 1$.
Consider two MDPs $\cM_1$, $\cM_2$ with horizon $H=2H_0$ specified as follows:
\begin{itemize}
    \item State space: Both have only one state denoted by $s$.
    \item Action space: Both have two actions, $a_L$ and $a_R$.
    \item Transition: The transition is trivial as there is only one state.
    \item Reward: There are two possible rewards: good reward $r_g$ and bad reward $r_b$, where $r_g>r_b$. For $\cM_1$, and all $h\in [H_0]$, we have
    \begin{align*}
    &r_{2h,\cM_1}(s,a_L)=r_g \\
    &r_{2h,\cM_1}(s,a_R)=r_b \\
    &r_{2h-1,\cM_1}(s,a_L)=r_b \\
    &r_{2h-1,\cM_1}(s,a_R)=r_g.
    \end{align*}
    For $\cM_2$, we have for all $h\in [2H_0]$
    \begin{align*}
    &r_{h,\cM_2}(s,a_L)=r_g \\
    &r_{h,\cM_2}(s,a_R)=r_b.
    \end{align*}
\end{itemize}

\noindent \textbf{Construction of Dataset.}
Fix any integer constant $K_0\geq 1$. We construct datasets with $K=2K_0$ trajectories for $\cM_1$ and $\cM_2$ respectively. 

For $\cM_1$, the trajectories are sampled by two policies: One policy always takes $a_L$ and the other always takes $a_R$. That is, there are two possible RTG-trajectories:
\begin{align*}
&\tau_1^1=(s, H_0 \overline r, a_L, s,  (H_0-1)\overline r+r_g, a_L,\cdots, s, r_g, a_L) \\
&\tau_1^2=(s, H_0 \overline r, a_R, s, (H_0-1)\overline r+r_b,a_R,\cdots, s, r_b, a_R).
\end{align*}
Let $\cD_1$ be the dataset containing $K_0$ $\tau_1^1$'s and $K_0$ $\tau_1^2$'s, and $\cD_1^1$ be the resulting trajectory-slice dataset with context length 1 for $\cD_1$.

For $\cM_2$, the trajectories are also sampled by two policies. Both policies alternate between actions $a_L$ and $a_R$, one starting with $a_L$ and the other starting with $a_R$. Then there are two possible RTG-trajectories:
\begin{align*}
&\tau_2^1=(s, H_0 \overline r, a_L, s, (H_0-1)\overline r+r_b,a_R,\cdots, s, r_b, a_R) \\
&\tau_2^2=(s, H_0 \overline r, a_R, s, (H_0-1)\overline r+r_g, a_L,\cdots, s,  r_g, a_L).
\end{align*}
Let $\cD_2$ be the dataset containing $K_0$ $\tau_2^1$'s and $K_0$ $\tau_2^2$'s, and $\cD_2^1$ be the resulting trajectory-slice dataset with context length 1 for $\cD_2$.

It is obvious that the datasets are compliant with their corresponding MDP. We also point out that $\cD_1^1 = \cD_2^1$. In other words, the trajectory-slice datasets with context length 1 for $\cM_1$ and $\cM_2$ are exactly the same. 

\noindent \textbf{Optimal returns and policies.}
For both MDPs, the optimal expected RTG is $g^*=Hr_g$, and the optimal policy is to take action with reward $r_g$ at every step. 

\noindent \textbf{Analysis of RCSL algorithms} Consider Any RCSL algorithm with context length 1. Suppose it outputs $\pi^1$, an RCSL policy with context length 1, given input $\cD_1^1=\cD_2^1$. At step 1, the conditioned RTG is $g^*$ for both $\cM_1$ and $\cM_2$. Suppose $\pi^1$ takes $a_L$ with probability $p$ and $a_R$ with probability $1-p$ when conditioning on $g^*$, where $0\leq p\leq 1$. Then the expected reward at step 1  is $pr_b+(1-p)r_g$ in $\cM_1$ and $pr_g+(1-p)r_b$. Since 
\begin{align*}
    [pr_b+(1-p)r_g]+[pr_g+(1-p)r_b]=r_b+r_g,
\end{align*}
there exists $i\in \{1,2\}$, such that 
the expected reward at step 1 is at most $\frac{1}{2}(r_g+r_b)$ in $\cM_i$. Then the expected total return in $\cM_i$ is at most $\frac{1}{2}(r_b+r_g)+(H-1)r_g$, so 
\begin{align*}
    \abs{g^*-J_i(\pi^1,g^*)}\geq \frac{1}{2}(r_g-r_b).
\end{align*}
Here $J_i(\pi^1,g^*)$ is the expected return of policy $\pi^1$ conditioning on desired RTG $g^*$ in MDP $\cM_i$.

In particular, let $r_g=1$ and $r_b=0$, then we have $\abs{g^*-J_i(\pi^1,g^*)}\geq \frac{1}{2}$. So we complete the proof.

\subsection{RCSL fails to stitch trajectories}
\label{sec:no_stitch_general}

In this section, we give a counterexample to show that the decision transformer \citep{chen2021decision}, a notable representative of RCSL, fails to stitch trajectories even when the MDP is deterministic.
Even when the dataset covers all state-action pairs of this MDP, the decision transformer cannot act optimally with probability $1$.

We abuse the notation of trajectory: for any $\tau$ in the dataset $\cD$, we say its sub-trajectory $\tau[i: j]$ is a trajectory in $\cD$, written as $\tau [i:j] \in \cD$.

Here we make an assumption of the generalization ability of the return-conditioned policy.
When faced with a trajectory not in the dataset, the policy outputs a mixture of the policy for trajectories in the dataset.
For a discrete state space, we assume that the policy only use information related to each state, not ensembling different states.
Thus, the used trajectories for mixture policy should have the same last state as that of the unseen trajectory.
To formalize, we have \Cref{asp:generalization}.

\begin{assumption} \label{asp:generalization}
Assume that the return-conditioned policy trained on the dataset $\cD$ has a mapping function $f_\cD$ which
\begin{enumerate*}
    \item maps an unseen trajectory $\tau$ to $\{(\tau_1, w_1), (\tau_2, w_2), \ldots\}$ where $\tau_i \in \cD$, the last state of $\tau_i$ is equal to the last state of $\tau$, and $w_1, w_2, \ldots$ form a probability simplex;

    \item maps any $\tau \in \cD$ to $\{(\tau, 1)\}$.
\end{enumerate*}
\end{assumption}

Under \Cref{asp:generalization}, the return-conditioned policy proceeds in the following way (\Cref{alg:generalization}):
\begin{algorithm}
\caption{Generalization behavior of a return-conditioned policy}
\label{alg:generalization}

\begin{algorithmic}[1]
\State \textbf{Input:} Return-conditioned policy $\pi: \cS\times \R \rightarrow \Delta(\cA)$ with a mapping function $f_\cD$, desired RTG $g_1$.
\State Observe $s_1$.
\State Set $\tau \gets (1, s_1, g_1)$.
\For{$h=1,2,\cdots,H$}
    \State Sample action $a_h\sim \sum_{(\tau_i, w_i) \in f_\cD (\tau)} w_i \pi(\cdot | \tau_i)$.
    \State Observe reward $r_h$ and next state $s_{h+1}$.
    \State Update $g_{h+1} \gets g_h - r_h$ and $\tau \gets (\tau, a_h; h + 1, s_{h + 1}, g_{h + 1})$.
\EndFor

\end{algorithmic}
\end{algorithm}

\begin{remark}
For the weighted-mixture part of \Cref{asp:generalization}, we can view a new trajectory as an interpolation of several trajectories in the dataset.
Thus, it is intuitive to view the output policy as an interpolation as well.
For the ``same last state'' requirement, this is satisfied in the Markovian RCSL framework.
This requirement is also without loss of generality for most generic MDPs, whose states are independent.
\end{remark}

Now we are ready to prove \Cref{thm:dt_stitching}.

\begin{proof} [Proof of \Cref{thm:dt_stitching}]
Construct $\cM$ in the following way:
\begin{itemize}
    \item Horizon: $H = 2$.

    \item State space: $\cS = \{s_1, s_2, s_3\}$.

    \item Action space: $\cA = \{a_1, a_2\}$.

    \item Initial state: The initial state is $s_1$, i.e., $\mu (s_1) = 1$.

    \item Transition: For any state $s_i (i \in \{1, 2\})$, any action $a_j (j \in \{1, 2\})$, we have $P (s_{i + 1} | s_i, a_j) = 1$. For $s_3$, we have $P (s_3 | s_3, a_j) = 1$.

    \item Reward: For any state $s_i (i \in \{1, 2, 3\})$, any action $a_j (j \in \{1, 2\})$, we have $r (s_i, a_j) = j - 1$.
\end{itemize}
The illustration is shown in \Cref{fig:stitching}.

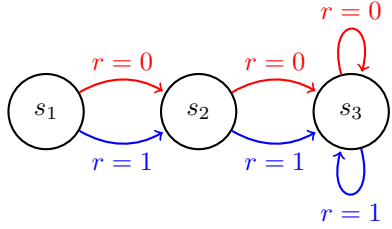
\begin{figure}[ht]
    \centering
\begin{tikzpicture}[shorten >=1pt,node distance=1cm,auto,thick]
    \tikzset{state/.style={shape=circle,draw=black,minimum size=1cm}}

    \node[state] (s1) {$s_1$};
    \node[state, right=of s1] (s2) {$s_2$};
    \node[state, right=of s2] (s3) {$s_3$};

    \foreach \i/\j in {1/2,2/3}{
        \draw[->, red, bend left] (s\i) to node[midway, above] {$r=0$} (s\j);
    }
    \foreach \i/\j in {1/2,2/3}{
        \draw[->, blue, bend right] (s\i) to node[midway, below] {$r=1$} (s\j);
    }
    
    \draw[->,red] (s3) to[loop above] node[above, midway] {$r=0$} (s3);
    \draw[->,blue] (s3) to[loop below] node[below, midway] {$r=1$} (s3);

\end{tikzpicture}
    \caption{The counterexample showing that decision transformers cannot stitch even in a deterministic MDP. Transitions of $a_1$ are plotted in red arrows, and those of $a_2$ are in blue.}
    \label{fig:stitching}
\end{figure}

The optimal policy is to take $a_2$ at both $s_1$ and $s_2$, with a total reward of $2$.

The dataset is
\begin{align*}
    \cD = \{&(s_1, 1, a_1; s_2, 1, a_2; s_3, 0), \\
    &(s_1, 1, a_2; s_2, 0, a_1; s_3, 0)\}.
\end{align*}
All the state-action pairs are covered by $\cD$, but the optimal trajectory is not in $\cD$.
Both $(s_1, 1, a_1)$ and $(s_1, 1, a_2)$ are in $\cD$, and they have the same weight in the loss function of decision transformer:
\begin{align*}
    \ell_\cD (\pi) = &-\log \pi (a_1 | s_1, 1) - \log \pi (a_2 | s_1, 1, a_1; s_2, 1) \\
    &-\log \pi (a_2 | s_1, 1) - \log \pi (a_1 | s_1, 1, a_2; s_2, 0).
\end{align*}
After minimizing the loss function, both $a_1$ and $a_2$ have non-zero probability at step $1$.

When we want to achieve the optimal return-to-go $2$, the trajectory is $\tau = (s_1, 2)$.
The return-conditioned policy $\pi$ is fed with a mixture of trajectories in $f (\tau) = \{[(s_1, 1), 1]\}$, because $\tau \not \in \cD$.
Thus, the policy has non-zero probability mass on the sub-optimal action $a_1$.
\end{proof}

\section{Ablation Studies}
\label{sec:ablation}
In this section, we study the following questions about MBRCSL:
\begin{enumerate}[(1)]
    \item How does a low-quality dynamics model or behavior policy affect the performance of MBRCSL?
    \item What is the relationshape between MBRCSL performance and the rollout dataset size?
    \item How does the final output policy affect the performance of MBRCSL?
    \item How does data distribution in offline dataset influence the performance of MBRCSL?s
\end{enumerate}
\subsection{Dynamics and Behavior Policy Architecture}
We tested on Pick and Place task to answer question (1). We designed three variants of MBRCSL:
\begin{itemize}
    \item ``Mlp-Dyn'' replaces the transformer dynamics model with an MLP ensemble dynamics model.
    \item ``Mlp-Beh'' replaces the diffusion behavior policy with a MLP behavior policy.
    \item ``Trans-Beh'' replaces the diffusion behavior policy with a transformer behavior policy.
\end{itemize}
According to \Cref{tab:model_ablation}, all three variants fail to reach a positive success rate. The potential reason is that all of them fail to generate optimal rollout trajectories. For Mlp-Dyn, the low-quality dynamics could mislead behavior policy into wrong actions. For Mlp-Beh and Trans-Beh, both of them applies a unimodal behavior policy, which lacks the ability the stitch together segments of different trajectoreis.  
\begin{table}[ht]
\caption{Ablation study about dynamics and behavior policy architecture on Pick and Place task. 
We use 4 seeds for each evaluation.
}
\label{tab:model_ablation}
\centering
\begin{tabular}{|l || c | c | c | c |}
 \hline
 Task & MBRCSL (ours) & Mlp-Dyn & Mlp-Beh & Trans-Beh \\
 \hline
 PickPlace & \textbf{0.48$\pm$0.04} & $0\pm 0$ & $0\pm 0$ & $0\pm 0$ \\
 \hline
\end{tabular}
\end{table}

\subsection{Rollout Dataset Size}
To answer question (2), we adjusted the size of rollout dataset, which consists of rollout trajectories with return higher than the maximum return in the offline dataset. As shown in \Cref{fig:rollout_ablation}, the performance increases significantly when rollout dataset size is less than 3000, and then converges between 0.45 and 0.5 afterwards. This is potentially because small rollout datasets are sensitive to less accurate dynamics estimate, so that inaccuarate rollouts will lower the output policy's performance.
\begin{figure}[ht]
    \centering
    \includegraphics[width=0.8\columnwidth]{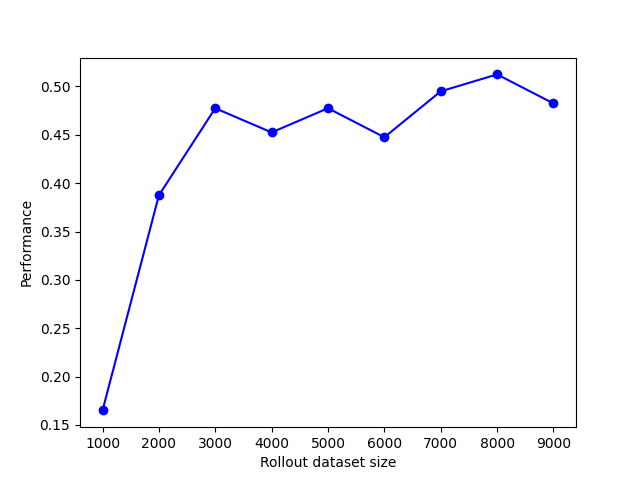}
    \caption{Ablation study about rollout dataset size on Pick and Place task. We use 4 seeds for each evaluation.}
    \label{fig:rollout_ablation}
\end{figure}

\subsection{Output Policy Architecture}
To answer question (3), we investigate the strength of RCSL on Point Maze task by replacing the final RCSL policy with the following architectures:
\begin{itemize*}
    \item CQL policy. That is, we train CQL on the rollout dataset, labeled as ``MBCQL''
    \item Return-conditioned Gaussian policy \citep{emmons2022rvs}, labeled as ``MBRCSL-Gaussian''.
    \item Decision Transformer \citep{chen2021decision}, labeled as ``MBDT''. 
    \item MLP policy trained via behavior cloning on the rollout dataset, labeled as ``MBBC''. 
\end{itemize*}

As shown in Table \ref{tab:maze_ablation}, RCSL output policy achieves higher averaged return than CQL with lower variance, demonstrating again the advantage of RCSL over DP-based methods on near-expert datasets. The Gaussian policy captures less optimal behaviors due to the modeling of stochasticity. Decision transformer output policy achieves lower performance than the MLP return-conditioned policy used in MBRCSL. MBRCSL achieves higher expected return than MBBC, because RCSL extracts the policy with highest return from the rollout dataset, while BC can only learn an averaged policy in rollout dataset. 
\begin{table}[ht]
\caption{Comparison between different output policies on Point Maze. All algorithms use rollout dataset collected by MBRCSL
We use 4 seeds for evaluations.
}
\label{tab:maze_ablation}
\centering
\begin{tabular}{|l || c | c | c | c | c |}
 \hline
 Task & MBRCSL (ours) & MBCQL & MBRCSL-Gaussian & MBDT & MBBC\\
 \hline
 Pointmaze & \textbf{91.5$\pm$7.1} & 49.0$\pm$24.1 & 62.7$\pm$31.4 & 79.8$\pm$4.6 & 83.0$\pm$7.5\\
 \hline
\end{tabular}
\end{table}

\subsection{Data Distribution}
Finally, we study the effect of data distribution on Point Maze task. We adjust the ratio between the two kinds of trajectories in Point Maze dataset (cf. \Cref{fig:maze}). To be concrete, let 
\begin{align}
    \text{ratio} := \frac{\text{Number of trajectories } S\to A\to B\to G}{\text{Offline dataset size}}.
    \label{eq:ratio}
\end{align}
We recorded the averaged performance w.r.t. different ratio value. In addition, we investigated the effect of ratio on rollout process, so we recorded "high return rate", i.e., the ratio between rollout dataset size (number of rollout trajectories with returns larger than the maximum in offline dataset), and the total number of generated rollout trajectories. During the experiments, we fixed the rollout dataset size. 

\begin{figure}[ht]
 \subfloat{
    \includegraphics[width=0.47\columnwidth]{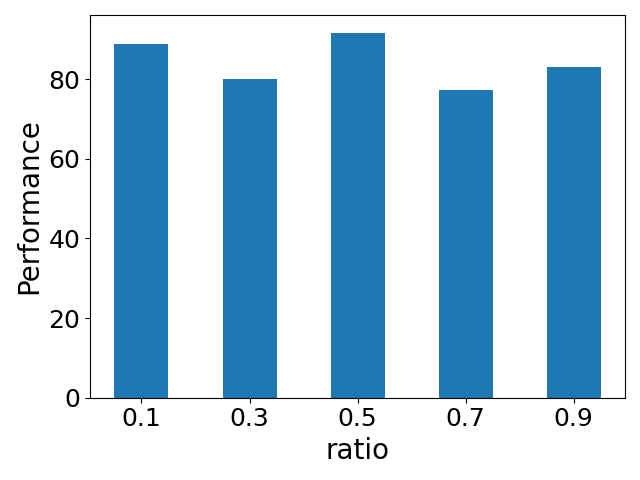}
    }
    \subfloat{
    \includegraphics[width=0.47\columnwidth]{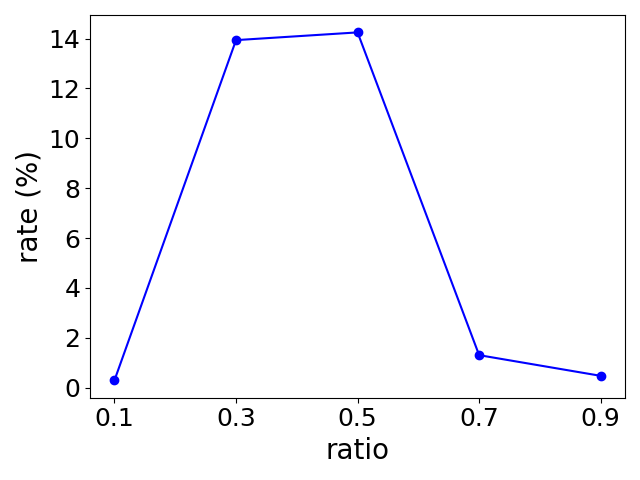}
    }

    \caption{Ablation study about data distribution on Point Maze. The left subfigure shows the averaged return for different data distribution, and the right subfigure shows the high return rate w.r.t. different data distribution. We use 4 seeds for each evaluation.}
    \label{fig:skew_ablation}
\end{figure}

According to \Cref{fig:skew_ablation}, we can see that the averaged return does not show an evident correlation with ratio. This is potentially because the rollout dataset consists only of rollout trajectories whose returns are higher than the maximum offline dataset, i.e., nearly optimal. However, the high return rate increases when ratio approaches 0.5, i.e., uniform distribution. In a skewed dataset, less frequent trajectories will attract less attention to the behavior policy, so that the behavior cannot successfully learn those trajectories, leading to a drop in the ratio of optimal stitched trajectories in rollouts. 

\section{Bellman completeness cannot be satisfied by simply increasing model size}
\label{sec:bellman_experiment}
We claim that the poor performance of DP-based offline RL baselines in \Cref{sec:experiment} is not due to small model size. For this aim, we tested CQL on Point Maze tasks with different model size. The Q-network is implemented with a 4-layer MLP network, with each hidden layer width equal to hyper-parameter ``dim''. The table below presents the averaged return of CQL on 4 seeds under different dim values:

\begin{table}[ht]
\caption{Influence of model size on performance of CQL on Point Maze.
We use 4 seeds for each evaluation. The averaged return does not increase with model size.
}
\label{tab:bellman_ablation}
\centering
\begin{tabular}{|l || c | c | c | c |}
 \hline
 CQL model size & dim=256 & dim=512 & dim=1024 & dim=2048 \\
 \hline
 PointMaze averaged return & 34.8 & 52.4 & 43.7 & 44.2 \\
 \hline
\end{tabular}
\end{table}
We can see from \Cref{tab:bellman_ablation} that the performance of CQL does not increase with a larger model. This is because Bellman completeness cannot be satisfied by simply increasing model size. In fact, it is a property about the function class on the MDP, so improving CQL would require modifying the architecture to satisfy this condition. However, it is highly non-trivial to design a function class that satisfies Bellman completeness.

\section{Experiment Details}
In this section, we introduce all details in our experiments. Code for reproducing the results is available at \url{https://github.com/zhaoyizhou1123/mbrcsl}.

\subsection{Details of Point Maze Experiment }
\label{sec:supp_maze}
\textbf{Environment.} We use the Point Maze from Gymnasium-Robotics (\url{https://robotics.farama.org/envs/maze/point_maze/}), which is a re-implementation of D4RL Maze2D environment \citep{fu2020d4rl}. We choose the dense reward setting, in which the reward is the exponential of negative Euclidean distance between current ball position and goal position. We designed a customized maze and associated dataset, as illustrated in Section \ref{sec:maze_result}. We fix horizon to be 200.

\noindent
\textbf{Baselines.} CQL \citep{kumar2020conservative} (including CQL part in MBCQL),  COMBO \citep{yu2021combo} and MOPO \citep{yu2020mopo} implementations are based on OfflineRL-Kit (\url{https://github.com/yihaosun1124/OfflineRL-Kit}), an open-source offline RL library. We keep the default hyperparameter choices. We implemented DT \citep{chen2021decision} as well as \%BC based on  official Decision Transformer repository (\url{https://github.com/kzl/decision-transformer}). We set context length to 20 and batch size to 256. We condition initial return of DT on the maximum dataset return. Other hyper-parameters are the same as \citet[Table 9]{chen2021decision}.

\noindent
\textbf{MBRCSL.} The ensemble dynamics is implemented based on OfflineRL-Kit (\url{https://github.com/yihaosun1124/OfflineRL-Kit}). We keep default hyperparameter choice. The diffusion behavior policy is adapted from Diffusion Policy repository (\url{https://github.com/real-stanford/diffusion_policy}), which is modified to a Markovian policy. We keep default hyperparameter choice, except that we use 10 diffusion steps. Empirically, this leads to good trajectory result with relatively small computational cost. Hyperparameter selections for output policy are listed in Table \ref{tab:hparam_maze}.

\begin{table}[ht]
    \centering
    \caption{Hyperparameters of output return-conditioned policy in MBRCSL for Point Maze experiments}
    \label{tab:hparam_maze}
    \begin{tabular}{l l}
    \hline
     \textbf{Hyperparameter} & \textbf{Value}  \\
     \hline 
     Hidden layers & 2  \\
     Layer width & 1024 \\
     Nonlinearity & ReLU \\
     Learning rate & 5e-5  \\
     Batch size & 256  \\
     Initial RTG & Maximum return in rollout dataset \\
     Policy Output & Deterministic \\
     \hline
    \end{tabular}
\end{table}

\subsection{Details of simulated robotic Experiments}
\label{sec:supp_pickplace}
\textbf{Environment.} The environments are implemented by Roboverse (\url{https://github.com/avisingh599/roboverse}). All three tasks (PickPlace, ClosedDrawer, BlockedDrawer) use the sparse-reward setting, i.e., the reward is received only at the end of an episode, which is 1 for completing all the required phases, and 0 otherwise. The action is 8-dimensional for all three tasks. 

For PickPlace, the initial of position of object is random, but the position of tray, in which we need to put the object is fixed. The observation is 17-dimensional, consisting of the object's position (3 dimensions), object's orientation (4 dimensions) and the robot arm's state (10 dimensions). The horizon is fixed to 40.

For ClosedDrawer, both the top drawer and bottom drawer start out closed, and the object is placed inside the bottom drawer. The observation is 19-dimensional, consisting of object's position (3 dimensions), object's orientation (4 dimensions), the robot arm's state (10 dimensions), the x-coordinate of the top drawer (1 dimension) and the x-coordinate of the bottom drawer (1 dimension). The horizon is fixed to 50.

For BlockedDrawer, the top drawer starts out open, while the bottom drawer starts out closed, and the object is placed inside the bottom drawer. The observation space is the same as that of ClosedDrawer, i.e., 19-dimensional. The horizon is fixed to 80.

\noindent
\textbf{Baselines.} CQL \citep{kumar2020conservative} (including CQL part in MBCQL),  COMBO \citep{yu2021combo} are based on OfflineRL-Kit (\url{https://github.com/yihaosun1124/OfflineRL-Kit}). We keep the default hyperparameter choices, except that we set conservative penalty $\alpha$ in both CQL and COMBO to 1.0, which leads to higher successful rate for CQL.  We implemented DT \citep{chen2021decision} based on  official Decision Transformer repository (\url{https://github.com/kzl/decision-transformer}). We set context length to 20 and batch size to 256. We condition initial return of DT on the maximum dataset return. Other hyper-parameters are the same as \citet[Table 9]{chen2021decision}.

\noindent
\textbf{MBRCSL.} The diffusion behavior policy is the same as in Point Maze Experiment, except that we use 5 diffusion steps, which has the best performance empirically. The transformer model in dynamics is based on MinGPT (\url{https://github.com/karpathy/minGPT}), an open-source simple re-implementation of GPT (\url{https://github.com/openai/gpt-2}). We adapt it into a dynamics model $\hat T_{\theta}(s',r|s,a)$. Output policy is implemented with autoregressive model. The detailed hyperparameter choices are listed in Table \ref{tab:hparam_pickplace}.

\begin{table}[t]
    \centering
    \caption{Hyperparameters of MBRCSL for Pick-and-Place experiments}
    \label{tab:hparam_pickplace}
    \begin{tabular}{l l}
    \hline
     \textbf{Hyperparameter} & \textbf{Value}  \\
     \hline 
     \hline
     Horizon & 40 PickPlace \\
             & 50 ClosedDrawer \\
             & 80 BlockedDrawer \\
     \hline
     Dynamics/architecture & transformer \\
     Dynamics/number of layers & 4 \\
     Dynamics/number of attention heads & 4 \\
     Dynamics/embedding dimension & 32 \\
     Dynamics/batch size & 256\\
     Dynamics/learning rate & 1e-3 \\
     \hline 
     Behavior/architecture & diffuser \\
     Behavior/diffusion steps & 5 \\
     Behavior/batch size & 256 \\
     Behavior/epochs & 30 PickPlace, \\
                     & 40 ClosedDrawer, BlockedDrawer \\
     \hline
     RCSL/architecture & autoregressive \\
     RCSL/hidden layers & 4  \\
     RCSL/layer width & 200 \\
     RCSL/nonlinearity & LeakyReLU \\
     RCSL/learning rate & 1e-3  \\
     RCSL/batch size & 256  \\
     RCSL/Initial RTG & Maximum return in rollout dataset \\
     \hline
    \end{tabular}
\end{table}

\subsection{Details of LinearQ Experiment }
\label{sec:supp_linearq}
The algorithms are implemented with minimal extra tricks, so that they resemble their theoretical versions discussed in Section \ref{sec:strength_rcsl}.

The policy in MLP-RCSL algorithm is implemented with a MLP model. It takes the current state and return as input, and outputs a predicted action. We train the policy to minimize the MSE error of action (cf. Eq. (\ref{eq:rcsl_loss})).

The $Q$-function in $Q$-learning algorithm is implemented with a MLP model. It takes current state $s$ as input and outputs $(\hat Q(s,a=0), \hat Q(s,a=1))$, the estimated optimal $Q$-functions on both $a=0$ and $a=1$. During evaluation, the action with higher estimated $Q$-function value is taken. We maintain a separate target $Q$-network and update target network every 10 epochs, so that the loss can converge before target network update. 

We train both algorithms for 300 epochs and record their performance on the last epoch.

\end{document}